\documentclass[letterpaper, 10 pt, conference]{ieeeconf}  

\IEEEoverridecommandlockouts                              

\usepackage{adjustbox}
\usepackage{graphicx}
\usepackage{graphics} 
\usepackage{dsfont}
\usepackage{epsfig} 
\usepackage{amsbsy}
\usepackage{dsfont}
\usepackage{mathtools}
\usepackage{bm}
\usepackage{subfiles}
\usepackage[noend,ruled,linesnumbered]{algorithm2e}
\usepackage{caption}
\usepackage{subcaption}
\usepackage{cite}
\usepackage{amsmath, amssymb}
\usepackage{bbm}
\usepackage{cite}
\usepackage{hyperref}
\DeclareCaptionFont{mysize}{\fontsize{8}{9.6}\selectfont}
\usepackage{adjustbox}
\usepackage{tabularx}
\newtheorem{assumption}{\textbf{Assumption}}

\newtheorem{lemma}{\textbf{Lemma}}

\newtheorem{theorem}{\textbf{Theorem}}
\newtheorem{definition}{\textbf{Definition}}

\newtheorem{cor}{\textbf{Corollary}}

\usepackage{booktabs}
\usepackage{graphicx}
\usepackage{graphics} 
\usepackage{dsfont}
\usepackage{epsfig} 
\usepackage{amsbsy}
\usepackage{dsfont}
\usepackage{mathtools}
\usepackage{bm}
\usepackage{subfiles}
\usepackage[noend,ruled,linesnumbered]{algorithm2e}
\usepackage{caption}
\usepackage{subcaption}
\usepackage{cite}
\usepackage{amsmath, amssymb}
\usepackage{bbm}
\usepackage{cite}
\usepackage{hyperref}
\DeclareCaptionFont{mysize}{\fontsize{8}{9.6}\selectfont}
\usepackage{adjustbox}
\usepackage{tabularx}
\usepackage{tikz}
\usepackage{pgfplots}
\pgfplotsset{compat=1.18}
\usetikzlibrary{arrows.meta,calc,positioning,decorations.pathreplacing}

\usepackage{graphics} 
\usepackage{epsfig} 
\usepackage{mathptmx} 
\usepackage{times} 
\usepackage{amsmath} 
\usepackage{amssymb}  
\usepackage{dsfont}

\usepackage{tikz}
\usepackage{pgfplots}
\pgfplotsset{compat=1.18}
\usetikzlibrary{arrows.meta,calc,positioning,decorations.pathreplacing}

\title{eCP: Equivariant Conformal Prediction with pre-trained models}
\author{Nikolaos Bousias$^{1}$, 
Lars Lindemann$^{2,3}$ and George Pappas$^{1}$ 
\thanks{$^{1}$ GRASP Laboratory, Department of Electrical \& Systems Engineering, University of Pennsylvania {\tt\small nbousias@seas.upenn.edu}}
\thanks{$^{2}$ Department of Computer Science, University of Southern California}
\thanks{$^{3}$ Automatic Control Laboratory, ETH Zürich}
}
\allowdisplaybreaks
\begin{document}

\maketitle
\thispagestyle{empty}
\pagestyle{empty}

\begin{abstract}

Conformal prediction, a post-hoc, distribution-free, finite-sample method of uncertainty quantification that offers formal coverage guarantees under the assumption of data exchangeability. Unfortunately, the resulting uncertainty regions can grow significantly in long horizon missions, rendering the statistical guarantees uninformative. To that end, we propose infusing CP with geometric information via group-averaging of the pretrained predictor to distribute the non-conformity mass across the orbits. Each sample now is treated as a representative of an orbit, thus uncertainty can be mitigated by other samples entangled to it via the orbit inducing elements of the symmetry group. Our approach provably yields contracted non-conformity scores in increasing convex order, implying improved exponential-tail bounds and sharper conformal prediction sets in expectation, especially at high confidence levels. We then propose an experimental design to test these theoretical claims in pedestrian trajectory prediction.

\end{abstract}

\section{Introduction}

Modern machine learning systems are increasingly deployed in settings where reliable uncertainty quantification is essential, such as robotics, autonomous navigation, and long-horizon forecasting. Conformal prediction (CP) has emerged as a principled and widely adopted framework for this purpose, providing finite-sample, distribution-free prediction sets with formal coverage guarantees under the mild assumption of data exchangeability \cite{vovk2005algorithmic,shafer2008tutorial}. Crucially, CP is post-hoc and model-agnostic: it can be applied on top of any pretrained predictor without retraining or architectural modification. Split conformal prediction extends this framework to modern supervised learning settings by separating training and calibration data \cite{lei2018distribution}.

Despite these strengths \cite{angelopoulos2021gentle}, a well-known limitation of conformal prediction is efficiency. While coverage is guaranteed by design, the resulting prediction sets can become excessively large---particularly at high confidence levels or over long horizons---rendering the uncertainty estimates practically uninformative. This issue is especially pronounced in sequential decision-making and trajectory prediction tasks, where compounding uncertainty quickly leads to overly conservative prediction regions. Recent work has proposed adaptive, weighted, and conditional variants of CP to improve efficiency without sacrificing coverage~\cite{romano2019conformalized,tibshirani2019covariate,gibbs2021adaptive}. More broadly, conformal methods have rapidly expanded across regression, classification, and structured prediction, with modern surveys documenting both their theoretical foundations and practical applications \cite{angelopoulos2021gentle}.

Uncertainty-aware prediction has become particularly important in robotics and safe planning. In pedestrian-rich and dynamically evolving environments, planners must account for multi-modal and long-horizon uncertainty while preserving formal safety guarantees. Several works have explored conformal prediction for sequential and multi-step forecasting problems relevant to this setting. Sun and Yu~\cite{sun2023copula} propose copula-based conformal prediction to model dependencies across time in multi-step time series prediction, producing calibrated uncertainty regions that better capture temporal correlations. A complementary line of work emphasizes the \emph{shape} and \emph{usability} of conformal prediction regions for downstream control and planning. Tumu \emph{et al.}~\cite{pmlr-v242-tumu24a} introduce optimized convex shape templates for multi-modal conformal prediction regions, enabling compact, planner-friendly uncertainty sets that can be efficiently integrated into motion planning pipelines. Conformal prediction has also been explicitly incorporated into safe planning frameworks in dynamic environments \cite{lindemann2023safe}, where prediction regions serve as probabilistic safety envelopes around predicted trajectories. More recently, Sun \emph{et al.}~\cite{sun2023conformal} combine conformal prediction with diffusion-based dynamics models to enable uncertainty-aware planning under learned stochastic dynamics. While these approaches improve calibration or the planning usability of prediction sets, they do not explicitly exploit known symmetry structure in the data.

Such symmetry structure is common in many learning problems. Real-world tasks often exhibit geometric or structural symmetries, including translation, rotation, reflection, permutation, or time-shift invariance. These symmetries are often explicitly exploited during model training via data augmentation, equivariant architectures~\cite{cohen2016group,cohen2019spherical,pmlr-v283-bousias25a,bousias2025deepequivariantmultiagentcontrol}, or, more broadly, geometric deep learning approaches~\cite{bronstein2017geometric}. When architectural equivariance is unavailable or impractical, data augmentation \cite{wang2022dataaugmentationvsequivariant} is often used as a heuristic alternative, though it provides no formal guarantees. A closely related line of work studies predictive inference under group invariance assumptions. In particular, \cite{dobriban2024symmpipredictiveinferencedata} develops a general framework for distribution-free predictive inference under arbitrary group symmetries~\cite{pillow2022symmpi,bates2023exchangeability}. Another relevant direction studies conformal prediction under geometric distribution shifts such as rotations or flips. In \cite{vanderlinden2025cp2leveraginggeometryconformal}, canonicalization is integrated into the conformal pipeline to restore approximate exchangeability and maintain coverage under geometric shifts. The primary goal in that setting is robustness: ensuring that conformal guarantees remain valid when test data differ geometrically from calibration data.

In contrast, our goal is not to restore exchangeability under shift, nor to redesign the predictive model itself, but to improve the efficiency of post-hoc uncertainty quantification by leveraging symmetry already present in the problem. This distinction is important in practice: uncertainty estimates are often constructed after training using pretrained models whose symmetry properties may be approximate or implicit rather than exact. Standard conformal prediction treats each sample in isolation, ignoring the fact that multiple transformed versions of the same input may be equivalent under a known symmetry group. As a result, symmetry-equivalent inputs may receive different nonconformity scores simply because of nuisance transformations, even though they contain the same semantic information.

In this work, we show that explicitly incorporating group symmetries into conformal prediction---without retraining the model---can substantially reduce uncertainty while preserving coverage guarantees. We introduce Equivariantized Conformal Prediction (eCP), a simple and general post-hoc procedure that infuses geometric information into CP by group-averaging the nonconformity score of a pretrained predictor. Rather than treating each data point as independent, eCP treats it as a representative of its entire group orbit, effectively redistributing nonconformity mass across symmetry-related samples.

Our approach is motivated by a key observation: while a pretrained model may not be exactly equivariant, its predictions often exhibit approximate symmetry induced by training data, architectural bias, or augmentation. By explicitly symmetrizing the nonconformity scores using group averaging, eCP reduces variability arising from arbitrary choices of coordinate frames or poses. This leads to systematically smaller calibration scores and, consequently, tighter conformal prediction sets, especially at high confidence levels where tail behavior dominates. We provide a rigorous theoretical analysis of this effect. In particular, we show that equivariantized nonconformity scores are contracted in increasing convex order relative to their unsymmetrized counterparts. This stochastic ordering implies sharper exponential-tail bounds and directly translates into improved efficiency of conformal prediction sets in expectation. Importantly, this improvement is achieved without sacrificing coverage and without assuming that the underlying predictor is exactly equivariant.

To validate our theory, we study eCP in the context of pedestrian trajectory prediction, a domain characterized by strong geometric symmetries and long-horizon uncertainty accumulation. Our experiments demonstrate that eCP consistently yields tighter prediction regions than standard conformal prediction, with the largest gains appearing at stringent confidence levels---precisely where vanilla CP is most conservative.
\noindent\textbf{Contributions}: The contributions of this paper are summarized as follows:
\begin{enumerate}
    \item We propose Equivariantized Conformal Prediction (eCP), a post-hoc method that incorporates group symmetries into conformal prediction via nonconformity score symmetrization, applicable to arbitrary pretrained prediction models.
    \item We establish that group-averaged nonconformity scores are contracted in increasing convex order, yielding improved tail behavior and sharper conformal prediction sets in expectation.
    \item We demonstrate empirically that eCP significantly reduces uncertainty in long-horizon trajectory prediction tasks, particularly at high confidence levels, while maintaining formal coverage guarantees.
\end{enumerate}
Overall, eCP provides a principled and practical mechanism for reducing conformal uncertainty by exploiting symmetry structure already present in the problem—bridging the gap between geometric inductive bias and post-hoc uncertainty quantification.

\section{Preliminaries}

\subsection{Group Theory \& Equivariant Functions}
A group $(G,\cdot)$ is a set $G$ equipped with an operator $\cdot:G\times G \rightarrow G$ that satisfies the properties of: 1) \emph{Identity:} $\exists e \in G$ such that $e \cdot g = g \cdot e = e$, 2) \emph{Associativity:} $\forall g,h,f \in G,\, g\cdot (h \cdot f) = (g \cdot h) \cdot f$, and 3) \emph{Inverse element:} $\forall g \in G,\, \exists g^{-1}$ such that $g^{-1} \cdot g = g \cdot g^{-1} = e$. Additional to its structure we can define the way that the group elements act on a space $X$ via a group action:
\begin{definition}\label{definitioin:group_action}
A map $\phi_g: X\to X$ is called an action of group element $g\in G$ on $X$ if for $e$ identity element $\phi_e(x)=x$ $\forall x\in X$ and $\phi_g\circ \phi_h=\phi_{g\cdot h}$ $\forall g,h\in G$.
\end{definition}

When $X$ is a vector space, the action of the group is defined through a linear group representation.
\begin{definition}
A linear group representation $(V,\rho)$ of a group $G$ is a map $\rho:G\to GL(V)$ from group $G$ to the general linear group $GL(V)$. The group action is then defined by the linear operator $\phi_g[x]=\rho(g)x$ for all $x\in V, g\in G$.
\end{definition}
Note here that a group action on a given space $X$ allow us to group different elements of $X$ in sets of orbits. More precisely given a group action $\phi_*$ an orbit of a element $x\in X$ is the set $\mathcal{O}_x^{\phi_*}=\left\{\phi_g(x)|g\in G\right\}$. 
\noindent In many application we require functions that respect the structure of a  group acting on their domain and codomain. 
We refer to these functions as equivariant and we formally define them as follow:
\begin{definition}
Given a group $G$ and corresponding group actions $\phi_g:X\to X$, $\psi_g:X\to X$ for $g\in G$  a function $f:X\to Y$ is said to be $(G,\phi_*,\psi_*)$-equivariant if and only if $\psi_g(f(x))=f\left(\phi_g(x)\right)\,, \forall x\in X, g\in G$.
\end{definition}
 A spacial case of equivariance is $G$-invariance occuring if $\psi_*:=id$ and the function $f$ is constant over group orbits of $X$, i.e. $f \circ \phi_* \equiv f\,,\, \forall *\in G$.

\subsection{Conformal Prediction}
Conformal prediction \cite{vovk2005algorithmic} is a framework for constructing predictive sets with finite-sample, distribution-free guarantees under minimal statistical assumptions. It provides a way to quantify uncertainty in machine learning predictions without relying on strong parametric assumptions about the data-generating process. The foundational assumption behind CP is exchangeability, which generalizes the i.i.d. assumption. A sequence of random variables $Z_{1:n}=(Z_z,...,Z_n), Z_i=(X_i,Y_i)$ is \emph{exchangeable} if
\[
P(Z_1, \ldots, Z_n) = P(Z_{\pi(1)}, \ldots, Z_{\pi(n)})
\]
for any permutation $\pi\in \mathbb{S}_n$, i.e. if its joint distribution is invariant to sample ordering. Under exchangeability, the ordering of data points carries no information, and this property enables the construction of valid p-values for candidate predictions, ensuring that calibration statistics computed on past data are valid for future predictions. This principle is what allows CP to maintain coverage guarantees even when the underlying model is misspecified or highly complex. 
We consider the setting of split conformal prediction, wherein a hold-out calibration set $\mathcal{D}_{\text{cal}}=\{(x_i,y_i)\}_{i=1:n_c}$ and a test set $\mathcal{D}_{\text{test}}=\{(x_i,y_i)\}_{i=1:n_t}$ are sampled under exchangeability for some fixed distribution \cite{angelopoulos2023gentle}. Given a pretrained prediction model $f_\theta$ and a non-conformity score function $s:\mathcal{X}\times\mathcal{Y}\rightarrow \mathbb{R}$ measuring how atypical a sample is by encoding the disparity between prediction and label, we compute calibration scores $s_i = s\big(f_\theta(x_i), y_i \big)$ on held-out $\mathcal{D}_{\text{cal}}$. The choice of nonconformity function strongly influences the efficiency/tightness of prediction sets but not their validity. For a new sample $(x_{n_c+1},y_{n_c+1})$, define the prediction set
\[
C_{1-\alpha}(x_{n_c+1}) = \{ y\in \mathcal{Y} : s\big(f_\theta(x_{n_c+1}), y_i \big) \le q_{1-\alpha} \},
\]
where $q_{1-\alpha}$ is the $(1-\alpha)$-quantile of the empirical calibration non-conformity distribution $S=\{s_{1:n_c}\}$ for tolerated miscoverage rate $\alpha\in(0,1)$. Then, with high probability a valid coverage guarantee on inclusion of the true label $y_{n_c+1}$ stands
\[
P\big(y_{n_c+1} \in C_{1-\alpha}(x_{n_c+1}) \big) \ge 1 - \alpha,
\]
under exchangeability \cite{lei2018distribution,romano2019conformalized}. While validity is unconditional, efficiency — the expected size or tightness of the prediction set — depends on the informativeness of the model and the nonconformity function. In practice, a well-calibrated and expressive base model yields smaller, more informative sets.

\section{Symmetries-infused Conformal Prediction}\label{section:Symmetries-infused Conformal Prediction}

Let $(\mathcal{X},\mathcal{Y})$ be measurable spaces and $G$ a (finite or compact) group
acting measurably on $\mathcal{X}$ and $\mathcal{Y}$.
Let $F\subset \mathcal{M}_b(\mathcal{X})$ be the set of bounded measurable functions $f:\mathcal{X}\rightarrow \mathcal{Y}$ and define its subset of $G$-equivariant functions $F_G:=\{f\in F \,:\, f\circ\phi_g = \psi_g \circ f\,,\, \forall g\in G\}$. A probability measure $\mu\in \mathcal{P}(\mathcal{Z}=\mathcal{X}\times \mathcal{Y})$ is $G$-invariant if, for $\tau_g:=(\phi_g,\psi_g)$, $\mu=\mu \circ \tau_g\,,\, \forall g\in G$.

\begin{definition}[Group-invariant distribution]
    Let \(Z=(X,Y)\) be a random variable with a probability distribution \(P\) on a sample space \(\mathcal{Z}=\mathcal{X}\times \mathcal{Y}\). The distribution \(P\) is group invariant if \(P\big(\tau_g(A)\big)=P(A)\) for all \(g\in G\) and any measurable subset \(A\subseteq \mathcal{Z}\).
\end{definition}
This is equivalent to saying that the joint probability measure is $G$-invariant and $(X,Y)\overset{d}{=}\big(\phi_g(X),\psi_g(Y)\big)$. 
These structures generalize permutation invariance, i.e. exchangeability, to more general geometric or combinatorial symmetries.

\begin{definition}
A sequence of random variables $(Z_1, \ldots, Z_{n+1})$ is \emph{$G^{n+1}$-exchangeable} if
\((Z_1, \ldots, Z_{n+1}) \stackrel{d}{=} \big(\tau_{g_1}(Z_1), \ldots, \tau_{g_{n+1}}(Z_{n+1})\big)\,,\, \forall g_1,\ldots,g_{n+1} \in G\)
\end{definition}
The $G^{n+1}$-exchangeability generalizes the exchangeability requirement of CP from just the permutation group $S_{n+1}$, to include the geometric transformations of samples by $G^{n+1}$, implying that the joint distribution is $G^{n+1}$-invariant.

\begin{assumption}\label{as:G-invariant_score} 
The nonconformity score function $s:F_G(\mathcal{X})\times\mathcal{Y}\to\mathbb{R}$ is \emph{$G$-invariant} in the sense that $s\big(\psi_g(y_i),\psi_g(y_j)\big)=s(y_i,y_j)\,,\, \forall g\in G$.
\end{assumption}

\begin{definition}[Symmetrization Operator]\label{def:symmetrization_operator_invariance}
    For some $G$-invariant non-conformity score $s:F_G(\mathcal{X})\times \mathcal{Y}\rightarrow \mathbb{R}_{\geq0}$, define the score symmetrization operator $\Pi:(F\times\mathcal{X})\times \mathcal{Y}\rightarrow \mathbb{R}_{\geq0}$:
    \begin{align}
    \Pi_G[s;f](x,y) := \int_{G} s\Big( f\big(\phi_{g^{-1}}(x) \big),\psi_{g^{-1}}(y) \Big) d\mu(g) \nonumber \\= \mathbb{E}_{\mu_G} \Big[ s\Big(f\big( \phi_{g^{-1}}(x)\big),\psi_{g^{-1}}(y)  \Big) \Big]
        \label{eq:symmetrization_operator_invariance}
    \end{align}
    where $\mu_G$ is the unique, left-invariant Haar probability measure on the compact Hausdorff topological group $G$, with $\mu_G(G)=1$.
\end{definition}
\begin{lemma}\label{lem:PiG_invariance_projection}
Under Assumption~\ref{as:G-invariant_score}, the symmetrization operator \ref{eq:symmetrization_operator_invariance} is a $G$-invariant projection in the sense that
\[
\Pi_G[s;f]\big(\phi_h(x),\psi_h(y)\big)=\Pi_G[s;f](x,y)\;\;\;\forall h\in G,\ (x,y)\in X\times Y
\]
Moreover, if $f\in F_G$ then $\Pi_G[s;f](x,y)=s\!\big(f(x),y\big)\, \forall (x,y)$, i.e. idempotence on $G$-equivariant models.
\end{lemma}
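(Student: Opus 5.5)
The plan is to prove the two claims separately, both by direct computation from \eqref{eq:symmetrization_operator_invariance}. The first uses the invariance of the Haar measure $\mu_G$ under translation. The second uses the equivariance identity $f\circ\phi_g=\psi_g\circ f$ together with Assumption~\ref{as:G-invariant_score}.

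\emph{Invariance.} Fix $h\in G$ and $(x,y)$. Substitute $(\phi_h(x),\psi_h(y))$ into the definition. Because $\phi$ and $\psi$ are group actions (Definition~\ref{definitioin:group_action}), we have $\phi_{g^{-1}}\circ\phi_h=\phi_{g^{-1}h}$, and similarly for $\psi$. This gives
\[
\Pi_G[s;f](\phi_h(x),\psi_h(y))=\int_G s\big(f(\phi_{(h^{-1}g)^{-1}}(x)),\psi_{(h^{-1}g)^{-1}}(y)\big)\,d\mu_G(g).
\]
Next apply the change of variables $g'=h^{-1}g$. Left-invariance of $\mu_G$ gives $d\mu_G(g)=d\mu_G(g')$, and since $g\mapsto h^{-1}g$ is a bijection of $G$, the integral equals $\Pi_G[s;f](x,y)$.

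The only delicate point here is measurability: the integrand must be a measurable, integrable function of $g$ so that Fubini and change-of-variables apply. This follows from the measurability of the actions in $(g,x)$, the measurability of $f$ and $s$, and the boundedness of $f$ (assuming $s$ is bounded on bounded sets), or more simply from nonnegativity of $s$, which lets us use Tonelli. If the group is not unimodular one might worry about left versus right translation, but compact groups are unimodular, so either side works.

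\emph{Idempotence on $F_G$.} Let $f\in F_G$. Then $f(\phi_{g^{-1}}(x))=\psi_{g^{-1}}(f(x))$, so the integrand becomes $s(\psi_{g^{-1}}(f(x)),\psi_{g^{-1}}(y))$. Assumption~\ref{as:G-invariant_score} collapses this to $s(f(x),y)$, which no longer depends on $g$. Since $\mu_G(G)=1$, the integral equals $s(f(x),y)$.

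For the word ``projection'', I would add the remark that $\Pi_G$ applied to an already-invariant score returns it unchanged. Concretely, if $\tilde s(x,y):=\Pi_G[s;f](x,y)$, then by the first part $\tilde s(\phi_{g^{-1}}x,\psi_{g^{-1}}y)=\tilde s(x,y)$, so averaging $\tilde s$ over $\mu_G$ returns $\tilde s$. This gives $\Pi_G\circ\Pi_G=\Pi_G$ when $\Pi_G$ is viewed as an operator on scores.

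I expect no real obstacle: the main care is in tracking the order of group elements under inversion, and in justifying the Haar change of variables with the measurability remarks above.
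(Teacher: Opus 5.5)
Your proposal is correct and follows essentially the same route as the paper's proof. For invariance, both use the homomorphism property $\phi_{g^{-1}}\circ\phi_h=\phi_{g^{-1}h}$ and the substitution $g'=h^{-1}g$ under left-invariance of $\mu_G$; for idempotence, both combine equivariance of $f$ with Assumption~\ref{as:G-invariant_score} and $\mu_G(G)=1$. Your additional remarks on measurability and on $\Pi_G\circ\Pi_G=\Pi_G$ are correct, and the paper does not spell them out.
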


\begin{lemma}\label{lem:expectation_orbit_conditional}
    The symmetrized non-conformity score is the expected non-conformity score across the orbit induced by $G$, i.e. $\Pi_G[s;f](x,y) = \mathbb{E}_{(\mathcal{X}\times \mathcal{Y})\sim P}[s(f(X),Y) \,|\, (X,Y)\in\mathcal{O}^{(\phi,\psi)}_{(x,y)} ]$.
\end{lemma}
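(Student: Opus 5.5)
The plan is to make the conditional expectation in the statement precise, then reduce it to the Haar integral in Definition~\ref{def:symmetrization_operator_invariance} via the group-invariance of $P$. Let $\tau_g=(\phi_g,\psi_g)$. The orbit map $g\mapsto \tau_g(x,y)$ pushes Haar measure forward to a distribution on $\mathcal{O}^{(\phi,\psi)}_{(x,y)}$. Conditioning on the event $\{(X,Y)\in\mathcal{O}_{(x,y)}\}$ is generally a null-event conditioning. I would therefore read it as conditioning on the orbit $\sigma$-algebra $\mathcal{I}_G$, the $\sigma$-algebra of $G$-invariant measurable sets, evaluated at the orbit of $(x,y)$. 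The claim then becomes $\mathbb{E}_P[s(f(X),Y)\mid \mathcal{I}_G](x,y)=\Pi_G[s;f](x,y)$ for $P$-a.e.\ $(x,y)$.

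\textbf{Step 1 (measurability).} By Lemma~\ref{lem:PiG_invariance_projection}, $\Pi_G[s;f]$ is constant on orbits. Assuming $s$ and $f$ are measurable and the action is jointly measurable, Fubini gives that $\Pi_G[s;f]$ is measurable. Being orbit-constant, it is $\mathcal{I}_G$-measurable. It is also integrable, since $s\ge 0$ and, by Fubini and invariance of $P$, $\mathbb{E}_P\,\Pi_G[s;f]=\mathbb{E}_P\, s(f(X),Y)$. This equality holds even when both sides are infinite, which is harmless for a nonnegative variable.

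\textbf{Step 2 (defining property).} It remains to check that for every $A\in\mathcal{I}_G$,
\[
\mathbb{E}_P\big[\mathbf{1}_A(Z)\, s(f(X),Y)\big]=\mathbb{E}_P\big[\mathbf{1}_A(Z)\,\Pi_G[s;f](Z)\big].
\]
Fix $g$. Because $Z\overset{d}{=}\tau_{g^{-1}}(Z)$ and $\mathbf{1}_A\circ\tau_{g^{-1}}=\mathbf{1}_A$, the left side equals $\mathbb{E}_P[\mathbf{1}_A(Z)\,s(f(\phi_{g^{-1}}X),\psi_{g^{-1}}Y)]$ for every $g$. Integrating over $g$ with respect to $\mu_G$ and exchanging the integrals by Tonelli (the integrand is nonnegative) gives the right side. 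Uniqueness of conditional expectation then yields the identity $P$-a.s.

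\textbf{Step 3 (orbitwise interpretation).} To recover the pointwise statement, I would disintegrate $P$ along orbits. For compact $G$, the conditional law of $Z$ given its orbit is the pushforward of $\mu_G$ under $g\mapsto\tau_g(z_0)$ for any representative $z_0$. This measure is the unique $G$-invariant probability on a homogeneous orbit, and it is well defined because left-invariance of Haar measure makes it independent of the representative. Integrating $s(f(\cdot),\cdot)$ against this measure is exactly $\int_G s(f(\phi_{g}x),\psi_{g}y)\,d\mu(g)$. Inversion invariance of Haar measure on a compact group then turns this into the $g^{-1}$ form of Definition~\ref{def:symmetrization_operator_invariance}.

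The main obstacle is this last identification. Justifying that the regular conditional distribution on an orbit is the Haar pushforward requires standard-Borel assumptions and a measurable cross-section, or an appeal to the uniqueness of invariant measures on homogeneous spaces. I would first try to sidestep it by stating the lemma in the $\mathcal{I}_G$ conditional-expectation form proved in Steps 1--2. For finite $G$ it reduces to an elementary average over the orbit, with multiplicities given by stabilizers.
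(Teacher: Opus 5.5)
Your proof is correct. Its engine, Step~2, is the same computation the paper uses: invariance of $P$ under $\tau_{g^{-1}}$, invariance of the orbit (or invariant-set) indicator, averaging over $\mu_G$, and an exchange of integrals. The difference lies in how the conditioning is formalized.

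The paper conditions literally on the single event $\{(X,Y)\in\mathcal{O}^{(\phi,\psi)}_{(x,y)}\}$ and writes the conditional expectation as $\int_{\mathcal{O}} s(f(X),Y)\,d\mu$. It then uses Lemma~\ref{lem:PiG_invariance_projection} to identify the Haar average at each orbit point with $\Pi_G[s;f](x,y)$, and integrates this constant over the orbit. That tacitly treats the restriction of $\mu$ to the orbit as a $G$-invariant \emph{probability} measure. This is automatic only when the orbit has positive probability. Whenever orbits are $P$-null, e.g.\ $\mathrm{SO}(2)$ with absolutely continuous data, it is exactly the disintegration you single out as the obstacle in Step~3.

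You instead verify the defining property of $\mathbb{E}[\,\cdot\mid\mathcal{I}_G]$ on all invariant sets and invoke uniqueness, using Lemma~\ref{lem:PiG_invariance_projection} only for $\mathcal{I}_G$-measurability. The price is a $P$-a.s.\ identity rather than one for every $(x,y)$. The gain is that no standard-Borel or cross-section hypotheses are needed. It also gives precisely the form the paper uses downstream: the proofs of the variance corollary and of Theorem~\ref{thm:icx} identify $\Pi_G[s;f]$ with $\mathbb{E}[S_f\mid\sigma(G)]$. So stopping after Step~2 loses nothing.

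One small slip in Step~3. With the parametrization $g\mapsto\tau_g(z_0)$, replacing $z_0$ by $\tau_h(z_0)$ right-translates $\mu_G$. Independence of the representative therefore uses right-invariance, not left-invariance. This is harmless, since Haar measure on a compact group is bi-invariant.
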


\begin{cor}
    From the law of total probability, the mean of the score symmetrization operator is the same as that of the score function itself, i.e. $\mathbb{E}_{\mathcal{X}\times \mathcal{Y}\sim P}\big[\Pi_G[s;f](X,Y)\big]= \mathbb{E}_{\mathcal{X}\times \mathcal{Y}\sim P}[s(f(X),Y)]$.
\end{cor}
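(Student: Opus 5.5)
The plan is to obtain the corollary directly from Lemma~\ref{lem:expectation_orbit_conditional} together with the tower property of conditional expectation. Write $Z=(X,Y)\sim P$ and let $\mathcal{G}$ be the $\sigma$-algebra generated by the orbit map $Z\mapsto \mathcal{O}^{(\phi,\psi)}_{Z}$, i.e.\ the $\sigma$-algebra of $G$-invariant measurable sets. Lemma~\ref{lem:expectation_orbit_conditional} says that, as a random variable, $\Pi_G[s;f](X,Y)$ is a version of $\mathbb{E}_P[s(f(X),Y)\mid \mathcal{G}]$. This version is $\mathcal{G}$-measurable, since by Lemma~\ref{lem:PiG_invariance_projection} it is constant on orbits. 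Taking expectations on both sides and using $\mathbb{E}[\mathbb{E}[W\mid\mathcal{G}]]=\mathbb{E}[W]$ (the law of total expectation) gives the claim immediately. Integrability is not an issue: $s\ge 0$, and $f$ is bounded by assumption, so the expectations are well defined (possibly $+\infty$ on both sides, with equality preserved by monotone convergence).

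As a backup route that does not depend on how the conditioning in Lemma~\ref{lem:expectation_orbit_conditional} is formalized, I would compute directly via Fubini/Tonelli. The steps are:
\begin{align*}
\mathbb{E}_P\big[\Pi_G[s;f](X,Y)\big]
&=\int_G \mathbb{E}_P\Big[s\big(f(\phi_{g^{-1}}(X)),\psi_{g^{-1}}(Y)\big)\Big]\,d\mu_G(g)\\
&=\int_G \mathbb{E}_P\big[s(f(X),Y)\big]\,d\mu_G(g)=\mathbb{E}_P[s(f(X),Y)].
\end{align*}
The first equality uses Tonelli, which applies because the integrand is nonnegative and jointly measurable, given that the action is measurable. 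The second uses $G$-invariance of $P$, namely $(\phi_{g^{-1}}(X),\psi_{g^{-1}}(Y))\overset{d}{=}(X,Y)$ for each fixed $g$. The last uses $\mu_G(G)=1$.

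The main obstacle is making sure the hypothesis that $P$ is $G$-invariant is actually in force. The corollary is stated without it, but it is the setting of the section, and both Lemma~\ref{lem:expectation_orbit_conditional} and the direct computation need it. I would state this assumption explicitly. I would also verify joint measurability of $(g,z)\mapsto s(f(\phi_{g^{-1}}x),\psi_{g^{-1}}y)$ so that the order of integration can be exchanged.
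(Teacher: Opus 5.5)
Your proposal is correct, and your primary route, which reads Lemma~\ref{lem:expectation_orbit_conditional} as $\Pi_G[s;f](X,Y)=\mathbb{E}[s(f(X),Y)\mid\mathcal{G}]$ and then applies the tower property, is exactly the argument the paper intends: it gives no separate proof beyond the phrase ``from the law of total probability.'' Your Tonelli computation is a worthwhile self-contained alternative, since it uses only $G$-invariance of $P$ (which you rightly flag as an unstated hypothesis) and $\mu_G(G)=1$, and so it does not depend on the informal proof the paper gives for Lemma~\ref{lem:expectation_orbit_conditional}.
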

With the aforementioned structures, a coverage certificate may be provided for the finite-sample validity of split Conformal Prediction for symmetry-preserving distributions, i.e. the calibration samples and the test sample $(X_{n_c+1},Y_{n_c+1})$ are $G^{n_c+1}\times\mathbb{S}_{n_c+1}$-exchangeable.
\begin{theorem}\label{thm:split-g-equivariant-validity}
Consider $\mathcal{D}_{\text{train}}$ training set and $\mathcal{D}_{\text{cal}}=\{(X_i,Y_i)\}_{i=1:n_c}$ calibration set, both drawn from a $G\times S_{N}$-invariant distribution $P$ on $\mathcal{X}\times\mathcal{Y}$, and let Assumption \ref{as:G-invariant_score} stand. 
Define group invariant calibration scores $\widetilde{S}_i=\Pi[s;f](X_i,Y_i)$ for $(X_i,Y_i)\in\mathcal{D}_{\text{cal}}$ and, for any test sample $x_{n_c+1}\in \mathcal{X}$ and candidate label $y\in\mathcal{Y}$, define the test score $\widetilde{S}_{n_c+1}(y)=\Pi[s;f](X_{n_c+1},y)$.
Let $\widetilde{S}_{(1)}\le\cdots\le \widetilde{S}_{(m)}$ denote the ordered calibration scores and, for some miscoverage rate $\alpha\in(0,1)$, let
\(k=\left\lceil (m+1)(1-\alpha)\right\rceil,\;\; q=\widetilde{S}_{(k)}.\)
Then, for equivariant split conformal prediction set $C_{1-\alpha}^{(G)}(X_{n_c+1})=\{y\in\mathcal{Y}:\widetilde{S}_{n_c+1}(y)\le q\}$ it stands that $$P\big(Y_{n_c+1}\in C_{1-\alpha}^{(G)}(X_{n_c+1})\mid \mathcal{D}_{\text{train}}\big)\ge 1-\alpha$$
\end{theorem}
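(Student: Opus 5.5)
The plan is to reduce the statement to the standard split conformal argument: condition on $\mathcal{D}_{\text{train}}$ and show that the symmetrized scores $\widetilde{S}_1,\dots,\widetilde{S}_{n_c+1}$, with $\widetilde{S}_{n_c+1}:=\widetilde{S}_{n_c+1}(Y_{n_c+1})$, are exchangeable. Conditioning on $\mathcal{D}_{\text{train}}$ fixes the pretrained predictor $f$. Because $\mathcal{D}_{\text{train}}$ is independent of the calibration and test samples (split setting), the conditional law of $(Z_1,\dots,Z_{n_c+1})$ given $\mathcal{D}_{\text{train}}$ is still $\mathbb{S}_{n_c+1}$-invariant. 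The map $z\mapsto \Pi_G[s;f](z)$ is then a fixed, deterministic, measurable function, measurable because $s$, $f$ and the actions are measurable and the Haar integral preserves measurability by Fubini. Applying one fixed function coordinatewise to an exchangeable vector yields an exchangeable vector, so $(\widetilde{S}_1,\dots,\widetilde{S}_{n_c+1})$ is exchangeable conditionally on $\mathcal{D}_{\text{train}}$. Note that this step uses only permutation invariance; the $G$-invariance part of the hypothesis is not needed for validity.

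Next, I will use the rank argument. By exchangeability, the rank of $\widetilde{S}_{n_c+1}$ among the $n_c+1$ scores is (sub)uniform on $\{1,\dots,n_c+1\}$, with ties broken conservatively. It follows that
\[
P\big(\widetilde{S}_{n_c+1}\le \widetilde{S}_{(k)}\mid\mathcal{D}_{\text{train}}\big)\ge \frac{k}{n_c+1}\ge 1-\alpha,
\]
where $k=\lceil (n_c+1)(1-\alpha)\rceil$ and $m=n_c$ is the calibration size. If $k>n_c$, the convention is $q=+\infty$ and the bound is trivial. Finally, the event $\{\widetilde{S}_{n_c+1}(Y_{n_c+1})\le q\}$ is by definition the event $\{Y_{n_c+1}\in C^{(G)}_{1-\alpha}(X_{n_c+1})\}$, which gives the claim.

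For completeness, I would also note the role of Lemma~\ref{lem:PiG_invariance_projection}. Since the scores are $G$-invariant, replacing any sample $Z_i$ by $\tau_{g_i}(Z_i)$ leaves every score unchanged. The full $G^{n_c+1}\times\mathbb{S}_{n_c+1}$-exchangeability therefore reduces to ordinary exchangeability of the scores, and the coverage holds for any orbit representative of the test point.

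The main obstacle is technical rather than conceptual. One must make the quantile and tie handling precise, which I would do either by random tie-breaking or by counting the event $\widetilde{S}_{n_c+1}\le \widetilde{S}_{(k)}$ through the inequality $\#\{i:\widetilde{S}_i\ge \widetilde{S}_{n_c+1}\}\ge n_c+2-k$. One must also verify that conditioning on the training data preserves exchangeability, which requires independence of $\mathcal{D}_{\text{train}}$ from the calibration and test data.
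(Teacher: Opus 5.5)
Your proposal is correct and follows essentially the same route as the paper: exchangeability of the calibration-plus-test tuple conditional on $\mathcal{D}_{\text{train}}$, preserved by applying the fixed map $\Pi_G[s;f]$ coordinatewise, followed by the standard rank argument giving $k/(n_c+1)\ge 1-\alpha$. Your tie-handling count $\#\{i:\widetilde S_i\ge \widetilde S_{n_c+1}\}\ge n_c+2-k$ is in fact the correct form of the paper's rank event (the paper's ``$R\le k$'' would need $R$ to count scores $\le$ the test score), and your remark that only permutation exchangeability is needed for validity agrees with what the paper's argument actually uses.
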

\begin{cor}\label{cor:variance_decomp_symm }
    Consider a $G$-invariant distribution $P$ on $\mathcal{X}\times\mathcal{Y}$, and let Assumption \ref{as:G-invariant_score} stand. Then, for $(X,Y)\sim P$, it stands that $\mathrm{Var}\big(s(f(X),Y)\big) \geq \mathrm{Var}\!\big(\Pi_G[s;f](X,Y)\big)$.
\end{cor}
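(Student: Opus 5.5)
The plan is to obtain Corollary~\ref{cor:variance_decomp_symm } from the law of total variance, using Lemma~\ref{lem:expectation_orbit_conditional} to identify $\Pi_G[s;f](X,Y)$ as a conditional expectation. Let $\mathcal{G}$ denote the $\sigma$-algebra of $G$-invariant measurable subsets of $\mathcal{Z}=\mathcal{X}\times\mathcal{Y}$. Equivalently, $\mathcal{G}$ is generated by the orbit map $(x,y)\mapsto\mathcal{O}^{(\phi,\psi)}_{(x,y)}$. Write $S:=s(f(X),Y)$ and $\widetilde S:=\Pi_G[s;f](X,Y)$. Since $f$ and $s$ are bounded in the setting of Section~\ref{section:Symmetries-infused Conformal Prediction}, both $S$ and $\widetilde S$ lie in $L^2(P)$, so every variance below is finite.

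First, I will show that $\widetilde S=\mathbb{E}[S\mid\mathcal{G}]$ almost surely. By Lemma~\ref{lem:PiG_invariance_projection}, $\widetilde S$ is $G$-invariant, hence $\mathcal{G}$-measurable. It remains to check $\mathbb{E}[S\,\mathbf 1_A]=\mathbb{E}[\widetilde S\,\mathbf 1_A]$ for every $A\in\mathcal{G}$. Fix $g\in G$. Since $P$ is $G$-invariant and $A$ is $G$-invariant, the change of variables $(X,Y)\mapsto\tau_{g^{-1}}(X,Y)$ gives
\[
\mathbb{E}\big[s(f(X),Y)\mathbf 1_A(X,Y)\big]=\mathbb{E}\big[s\big(f(\phi_{g^{-1}}X),\psi_{g^{-1}}Y\big)\mathbf 1_A(X,Y)\big].
\]
Integrating both sides over $g$ against the Haar measure $\mu_G$ and applying Fubini, which is justified by boundedness, yields $\mathbb{E}[S\mathbf 1_A]=\mathbb{E}[\widetilde S\mathbf 1_A]$. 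This is the rigorous form of Lemma~\ref{lem:expectation_orbit_conditional}. It also recovers the earlier Corollary on equality of means, by taking $A=\mathcal{Z}$.

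Next, I will apply the law of total variance:
\[
\mathrm{Var}(S)=\mathbb{E}\big[\mathrm{Var}(S\mid\mathcal{G})\big]+\mathrm{Var}\big(\mathbb{E}[S\mid\mathcal{G}]\big)=\mathbb{E}\big[(S-\widetilde S)^2\big]+\mathrm{Var}(\widetilde S)\ \ge\ \mathrm{Var}(\widetilde S).
\]
The first term is nonnegative. As an alternative route, one can note that conditional expectation is the orthogonal $L^2$ projection onto $\mathcal{G}$-measurable functions, which suffices since the means coincide. A third option is conditional Jensen, $\widetilde S^2\le\mathbb{E}[S^2\mid\mathcal{G}]$, followed by taking expectations. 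As a byproduct, equality holds if and only if $S$ is almost surely constant on orbits. By Lemma~\ref{lem:PiG_invariance_projection}, this is the case when $f\in F_G$.

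The main obstacle is the measure-theoretic content of the first step. Conditioning on an orbit, which is typically a null set when $G$ is continuous, must be made rigorous via the invariant $\sigma$-algebra rather than naive conditioning. The step also needs joint measurability of $(g,x,y)\mapsto s(f(\phi_{g^{-1}}x),\psi_{g^{-1}}y)$ to apply Fubini. I would take this from the standing assumption that the action is measurable and $G$ is compact, possibly adding that the action map $G\times\mathcal{Z}\to\mathcal{Z}$ is jointly measurable. Assumption~\ref{as:G-invariant_score} enters only through Lemma~\ref{lem:PiG_invariance_projection}.
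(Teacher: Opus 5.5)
Your proposal is correct and takes the same route as the paper: identify $\Pi_G[s;f](X,Y)$ with $\mathbb{E}[S\mid\sigma(G)]$ over the invariant $\sigma$-algebra, then apply the law of total variance. Your direct check that $\mathbb{E}[S\mathbf{1}_A]=\mathbb{E}[\widetilde S\mathbf{1}_A]$ for invariant $A$ (via $G$-invariance of $P$ and Fubini) is a more rigorous substitute for the paper's orbit-conditioning proof of Lemma~\ref{lem:expectation_orbit_conditional}, though note the paper assumes only $f$ bounded, not $s$, so you should take $S\in L^2(P)$ as the paper does and use Tonelli (since $s\ge 0$) for the interchange of integrals.
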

Variance reduction under symmetrized non-conformity scores, however, only concentrates the mass of the distribution of the calibration non-conformity scores; it does not shrink the tail events that determine the  conformal set $C_{1-\alpha}^{(G)}$. The latter is solely governed by the quantiles of the distribution.
\begin{assumption}\label{as:convex_score}
    The non-conformity score function $s:(\mathcal{X}\times F)\times \mathcal{Y}\rightarrow \mathbb{R}_{\geq0}$ is convex in the prediction argument. 
\end{assumption}
For a pre-trained model $f_\theta:\mathcal{X}\rightarrow \mathcal{Y}$ on $\mathcal{D}_\text{train}$, Assumptions \ref{as:G-invariant_score} and \ref{as:convex_score} on a non-conformity score function yield via Jensen's inequality:
\begin{align}
    \Pi_G[s;f](x,y) &:= \int_{G} s\Big( f_\theta\big(\phi_{g^{-1}}(x) \big),\psi_{g^{-1}}(y) \Big) d\mu(g) \nonumber\\
    &= \int_{G} s\Big( \psi_g \big(f_\theta\big(\phi_{g^{-1}}(x) \big)\big),y \Big) d\mu_G(g) \\
    &\geq s\bigg( \int_G \psi_g \circ f_\theta \circ \phi_{g^{-1}}(x) d\mu_G(g) , y \bigg)\label{eq:jENSEN_INEQUALITY}
\end{align}
We can now define the equivariantized pre-trained model:
\begin{align}\label{eq:equivariantized_predictor}
    f_\theta^G(x):=\int_G \psi_g \circ f_\theta \circ \phi_{g^{-1}}(x)\,d\mu_G(g)\\=\mathbb{E}_G \big[ \psi_g \circ f_\theta \circ \phi_{g^{-1}}(x) \big]
\end{align}
This is the canonical projection of $f$ onto the space of $G$-equivariant maps. The domain of the non-conformity score is, thus, restricted now to $s:(\mathcal{X}\times F_G)\times \mathcal{Y}\rightarrow \mathbb{R}_{\geq0}$. The proof that predictor $f_\theta^G$ is $G$-equivariant is similar to that of Lemma \ref{lem:PiG_invariance_projection} and is therefore omitted. From Assumption \ref{as:G-invariant_score} it is obvious that the non-conformity score remain $G$-invariant for $G$-equivariant models. Furthermore, Theorem \ref{thm:split-g-equivariant-validity} still stands, allowing for a more practical implementation of the Equivariant Split Conformal Prediction. Combining \ref{eq:jENSEN_INEQUALITY} with Lemma \ref{lem:expectation_orbit_conditional}
\begin{align}\label{eq:inequality_}
    &s\bigg( \mathbb{E}_G \big[ \psi_g \circ f_\theta \circ \phi_{g^{-1}}(x) \big] , y \bigg) \nonumber\\&\qquad\leq \mathbb{E}_{(\mathcal{X}\times \mathcal{Y})\sim P}[s(f_\theta(X),Y) \,|\, (X,Y)\in\mathcal{O}^{(\phi,\psi)}_{(x,y)} ]
\end{align}
and taking expectations yields
\begin{align}\label{eq:equivariantized_mean_score_decrease}
     \mathbb{E}_{(\mathcal{X}\times \mathcal{Y})\sim P}[s(f^G(X),Y)]\leq \mathbb{E}_{(\mathcal{X}\times \mathcal{Y})\sim P}[s(f(X),Y)]
\end{align}
i.e. the non-conformity scores are expected to contract when we equivariantize the pretrained model according to \ref{eq:equivariantized_predictor}. This is owed to the infusion of geometric information at every prediction.

\begin{algorithm}[t]
\caption{eCP: Equivariant Conformal Prediction}
\label{alg:ecp-approx-alg2e}
\DontPrintSemicolon
\KwIn{$\mathcal{D}_{\mathrm{cal}}=\{(x_i,y_i)\}_{i=1}^{n_c}$, $f_\theta:\mathcal{X}\to\mathcal{Y}$, $G$, $s$, $\alpha\in(0,1)$, $m$}
\KwOut{Conformal prediction set $\widehat C^{(G)}_{1-\alpha}(x)$}

\If{$G$ is finite and enumerable}{
    $m \gets |G|\;,\;$
    $\Xi_m \gets \{g_i | \forall i \in \{1,\dots,|G|\}\}\;$
}
\Else{
    $\Xi_m \gets\{ g_1,\dots,g_m\} \sim \mu_G$
}

\For{$i\gets 1$ \KwTo $n_c$}{
    $\widehat y_i \gets \frac{1}{m}\sum_{g_j\in \Xi_m}\psi_{g_j}\!\bigl(f_\theta(\phi_{g_j^{-1}}(x_i))\bigr)$\;
    
    $S_i \gets s\!\left(\widehat y_i, y_i\right)$\;
}

Sort $\{S_i\}_{i=1}^{n_c}$ as $S_{(1)}\le \cdots \le S_{(n_c)}$

$k \gets \left\lceil (n_c+1)(1-\alpha)\right\rceil$

$q \gets S_{(k)}$\;

\Return{$\widehat C^{(G)}_{1-\alpha}(x)=\left\{y\in\mathcal{Y}: s\!\left( f_\theta^{G}(x),y\right)\le q\right\}$}\;
\end{algorithm}

\subsection{Expected quantile contraction via symmetrized predictors}
\begin{definition}[Increasing convex order of distributions]
For integrable random variables $U,V$, we write $U\preceq_{\mathrm{icx}} V$ if
$\mathbb{E}[\nu(U)]\le \mathbb{E}[\nu(V)]$ for all \emph{increasing convex} $\nu:\mathbb{R}\to\mathbb{R}$.
Equivalently, the stop-loss ordering is defined as
\[
U\preceq_{\mathrm{icx}} V \;\Longleftrightarrow\;
\mathbb{E}\big[(U-t)_+\big] \le \mathbb{E}\big[(V-t)_+\big]\quad \forall\,t\in\mathbb{R}.
\]
\end{definition}

\begin{theorem}\label{thm:icx}
Consider $G^{n_c+1}\times\mathbb{S}_{n_c+1}$-exchangeability and convexity of the $G$-invariant non-conformity score $s$ in its prediction argument. The non-conformity distributions of the pretrained model and its equivariantized version, then, satisfy
\begin{align*}
    s\big(f^G(X),Y\big) = S_{f^G}\preceq_{\mathrm{icx}} S_f= s\big(f(X),Y\big) \;\;\;,\; (X,Y)\sim P
\end{align*}
\end{theorem}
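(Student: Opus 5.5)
The plan is to establish the ordering pointwise at the level of conditional expectations and then pass to the unconditional law through Jensen's inequality for an increasing convex test function. Fix an increasing convex $\nu:\mathbb{R}\to\mathbb{R}$ with $\nu(S_f)$ integrable. Let $g\sim\mu_G$ be drawn independently of $(X,Y)\sim P$, and set $(X',Y'):=(\phi_{g^{-1}}(X),\psi_{g^{-1}}(Y))$.

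\emph{Step 1 (invariance).} By $G$-invariance of $P$ (the single-sample marginal of the $G^{n_c+1}\times\mathbb{S}_{n_c+1}$-exchangeability), conditionally on $g$ we have $(X',Y')\overset{d}{=}(X,Y)$. Hence the same holds unconditionally, and
\[
\mathbb{E}[\nu(S_f)]=\mathbb{E}\big[\nu\big(s(f(X'),Y')\big)\big].
\]

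\emph{Step 2 (rewrite the score).} By Assumption~\ref{as:G-invariant_score}, applied with $\psi_g$ to both arguments, $s(f(X'),Y')=s\big(\psi_g f(\phi_{g^{-1}}(X)),Y\big)$. This is exactly the integrand in the display preceding \eqref{eq:jENSEN_INEQUALITY}.

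\emph{Step 3 (conditional Jensen).} Condition on $(X,Y)$ and average over $g$. By \eqref{eq:equivariantized_predictor}, $\mathbb{E}[\psi_g f(\phi_{g^{-1}}(X))\mid X,Y]=f^G(X)$. Convexity of $s$ in its prediction argument (Assumption~\ref{as:convex_score}) then gives, exactly as in \eqref{eq:jENSEN_INEQUALITY},
\[
\mathbb{E}\big[s(f(X'),Y')\mid X,Y\big]\ \ge\ s\big(f^G(X),Y\big)=S_{f^G}.
\]
Since $\nu$ is increasing and convex, we obtain
\[
\nu(S_{f^G})\le \nu\big(\mathbb{E}[s(f(X'),Y')\mid X,Y]\big)\le \mathbb{E}\big[\nu(s(f(X'),Y'))\mid X,Y\big].
\]
The first inequality uses monotonicity of $\nu$, and the second is conditional Jensen for the convex $\nu$.

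\emph{Step 4 (conclude).} Taking total expectations and using Step 1 gives $\mathbb{E}[\nu(S_{f^G})]\le\mathbb{E}[\nu(S_f)]$. Since $\nu$ was an arbitrary increasing convex function, this is $S_{f^G}\preceq_{\mathrm{icx}}S_f$. Equivalently, one can take $\nu(u)=(u-t)_+$ to get the stop-loss form directly. In words, $S_{f^G}$ is dominated by a conditional expectation of a copy of $S_f$, which is the standard Strassen-type characterization of increasing convex order.

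\emph{Main obstacle.} The substantive step is the measure-theoretic justification of Steps 1 and 3. First, one needs $(X',Y')\overset{d}{=}(X,Y)$ when $g$ is random and independent; this follows from Fubini together with invariance for each fixed $g$. Second, one needs the Bochner integral defining $f^G$ to commute with the conditional expectation. Third, one needs the integrability required for Jensen. I would handle the last two by using boundedness of $f\in F$ and measurability of the action $(g,x)\mapsto\phi_g(x)$. If $s$ fails to be integrable, I would first prove the result for truncated $\nu$, namely $\nu\wedge N$ composed appropriately, or restrict to $\nu$ of linear growth, and then pass to the limit by monotone convergence.

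A further subtlety is that the convexity used here must be in the prediction argument for each fixed label $y$. This is precisely Assumption~\ref{as:convex_score}, after Assumption~\ref{as:G-invariant_score} has moved all the group action onto the prediction.
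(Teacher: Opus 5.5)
Your proof is correct. It shares the paper's skeleton: pointwise Jensen in the prediction argument, then monotonicity of the test function, then conditional Jensen, then total expectation. The difference is in how the conditioning step is organized.

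The paper conditions on the $\sigma$-algebra $\sigma(G)$ of $G$-invariant events. It uses Lemma~\ref{lem:expectation_orbit_conditional} to identify $\Pi_G[s;f_\theta](X,Y)$ with $\mathbb{E}[S_f\mid\sigma(G)]$. The $G$-invariance of $P$ and the Haar property of $\mu_G$ are both consumed inside that identification, and the conclusion comes from the tower property applied to $S_f$ itself.

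You instead adjoin an independent $g\sim\mu_G$ and condition on $(X,Y)$. The conditional expectation is then just a Fubini integral over $g$, which is exactly $\Pi_G[s;f_\theta](X,Y)$. You use invariance of $P$ only once, as $\tau_{g^{-1}}(X,Y)\overset{d}{=}(X,Y)$ for each fixed $g$.

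Your route has two advantages. First, it is more elementary: it never conditions on individual orbits (typically null events for $\mathrm{SO}(2)$), so it does not depend on Lemma~\ref{lem:expectation_orbit_conditional} at all. Second, it never uses left-invariance of $\mu_G$. The same argument therefore gives $s(f^{\mu}(X),Y)\preceq_{\mathrm{icx}} S_f$ for $f^{\mu}(x):=\int_G\psi_g f_\theta(\phi_{g^{-1}}(x))\,d\mu(g)$ with \emph{any} probability measure $\mu$ on $G$. In particular it covers the Monte Carlo predictor $f_\theta^{G,m}$ with a fixed set of sampled elements, which is what is actually calibrated in the experiments.

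The paper's route gives, in exchange, a structural statement on the original probability space. There $S_{f^G}$ is dominated by a conditional expectation of $S_f$ itself, rather than of a randomized copy. The paper reuses this same identification for its variance-reduction corollary.
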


This means that \(Y\) is "more spread out" than \(X\) in a specific sense, where the "more spread out" property is captured by the behavior of increasing convex functions. If $f_\theta$ is already $G$-equivariant (so $f_\theta^G=f_\theta$) or if the non-conformity score is affine along the convex hull of the $G$-orbit of the predictions (so the Jensen step is tight), then equality holds for all increasing convex functions.

For any $p\ge 1$ with $\mathbb{E}[S_f^p]<\infty$, taking $\nu(t)=t^p$ gives $\mathbb{E}\big[S_{f^G}^p\big]\;\le\;\mathbb{E}\big[S_f^p\big]$.
In particular, $\mathbb{E}[S_{f^G}]\le \mathbb{E}[S_f]$ and $\mathbb{E}[S_{f^G}^2]\le \mathbb{E}[S_f^2]$. Contrary to increasing convex ordering, standard convex order implies the equality of means, i.e., if $X \leq_{cx} Y$, then $\mathbb{E}[X] = \mathbb{E}[Y]$ (use $\nu(x)=\pm x$), and so by taking $\nu(x) = x^2$, we obtain that if $X \leq_{cx} Y$, then $\mathrm{Var}[X] \leq \mathrm{Var}[Y]$. Also, in case $\mathbb{E}[X] = \mathbb{E}[Y]$, then $X \leq_{cx} Y \Leftrightarrow X \leq_{icx} Y$.

\begin{theorem}\label{theorem:cvar_ordering}[CVaR contraction under symmetrization]
Fix $\alpha\in[0,1)$ and assume $\mathbb{E}[S_f^+]<\infty$. 
Under the assumptions of Theorem~\ref{thm:icx}, with
$S_f:=s(f_\theta(X),Y)$ and $S_{f^G}:=s(f_\theta^G(X),Y)$, $\forall\alpha\in(0,1)$ it stands that:
\[
0 \;\le\; \mathrm{CVaR}_\alpha(S_f)- \mathrm{CVaR}_\alpha(S_{f^G}) \;\le\; \dfrac{\mathbb{E}[S_f - S_{f^G}]}{1-\alpha}
\]
\end{theorem}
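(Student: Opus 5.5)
We will work with the Rockafellar--Uryasev variational representation
\[
\mathrm{CVaR}_\alpha(S)=\min_{t\in\mathbb{R}}\Big\{t+\tfrac{1}{1-\alpha}\,\mathbb{E}\big[(S-t)_+\big]\Big\},
\]
which is valid for integrable $S$. The minimum is attained at $t=\mathrm{VaR}_\alpha(S)$, the $\alpha$-quantile. Both scores are nonnegative, and $\mathbb{E}[S_{f^G}]\le\mathbb{E}[S_f]<\infty$ by \eqref{eq:equivariantized_mean_score_decrease}, so both CVaRs are finite and the representation applies to each. The two inequalities then come from two different pieces of structure: the stop-loss form of Theorem~\ref{thm:icx} gives the lower bound, and the pointwise Jensen inequality \eqref{eq:jENSEN_INEQUALITY} gives the upper bound.

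\textbf{Lower bound.} By Theorem~\ref{thm:icx}, $S_{f^G}\preceq_{\mathrm{icx}}S_f$, so $\mathbb{E}[(S_{f^G}-t)_+]\le\mathbb{E}[(S_f-t)_+]$ for every $t$. Hence, for each $t$, the objective in the variational formula for $S_{f^G}$ is at most the objective for $S_f$. Taking the infimum over $t$ on both sides gives $\mathrm{CVaR}_\alpha(S_{f^G})\le\mathrm{CVaR}_\alpha(S_f)$, which is the left inequality.

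\textbf{Upper bound.} Let $t^\ast=\mathrm{VaR}_\alpha(S_{f^G})$ be a minimizer for $S_{f^G}$, and use it as a feasible, generally suboptimal, point for $S_f$. This gives
\[
\mathrm{CVaR}_\alpha(S_f)-\mathrm{CVaR}_\alpha(S_{f^G})\le\tfrac{1}{1-\alpha}\,\mathbb{E}\big[(S_f-t^\ast)_+-(S_{f^G}-t^\ast)_+\big].
\]
The map $u\mapsto(u-t)_+$ is $1$-Lipschitz and nondecreasing. Therefore, whenever $a\ge b$ we have $0\le (a-t)_+-(b-t)_+\le a-b$.

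The key step is to obtain the \emph{pointwise} ordering $S_{f^G}\le \Pi_G[s;f](X,Y)$ on the same probability space, and then pass to $S_f$. Jensen gives the pointwise bound $s(f^G(x),y)\le\Pi_G[s;f](x,y)$, but not $S_{f^G}\le S_f$ pointwise. I would therefore run the argument with $\widetilde S:=\Pi_G[s;f](X,Y)$ as an intermediate variable and proceed in three steps:
\begin{enumerate}
\item Pointwise Jensen gives $\mathbb{E}[(\widetilde S-t^\ast)_+-(S_{f^G}-t^\ast)_+]\le\mathbb{E}[\widetilde S-S_{f^G}]$.
\item Lemma~\ref{lem:expectation_orbit_conditional} and conditional Jensen give $\mathbb{E}[(\widetilde S-t)_+]\le\mathbb{E}[(S_f-t)_+]$.
\item By the Corollary following Lemma~\ref{lem:expectation_orbit_conditional}, $\mathbb{E}[\widetilde S]=\mathbb{E}[S_f]$.
\end{enumerate}

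Steps 1 and 2 alone bound the stop-loss difference against $S_f$ only from one side. To close the chain I would instead use $G$-invariance of $P$ to show that $(X,Y)\mapsto s(f(\phi_{g^{-1}}X),\psi_{g^{-1}}Y)$ has the same law as $S_f$ for each $g$. I would then write
\[
\mathbb{E}[(S_f-t^\ast)_+]=\int_G\mathbb{E}\big[(s_g-t^\ast)_+\big]\,d\mu_G(g),
\]
where $s_g:=s\big(f(\phi_{g^{-1}}X),\psi_{g^{-1}}Y\big)$. Since $s_g$ and $S_{f^G}$ live on a common space, Lipschitzness bounds the integrand difference by $(s_g-S_{f^G})_+$. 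Integrating over $g$ and using the Jensen bound then yields a bound of order $\mathbb{E}[\widetilde S-S_{f^G}]=\mathbb{E}[S_f-S_{f^G}]$.

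The main obstacle is the sign of $s_g-S_{f^G}$. It is not pointwise nonnegative, so $\int(s_g-S_{f^G})_+\,d\mu_G$ may exceed $\int(s_g-S_{f^G})\,d\mu_G$. If this gap cannot be closed directly, the fallback is to bound $\mathbb{E}[(S_f-t)_+]\le \mathbb{E}[(S_{f^G}-t)_+]+\mathbb{E}[(S_f-S_{f^G})_+]$. This would give the claim with $\mathbb{E}[(S_f-S_{f^G})_+]$ in the numerator, which reduces to the stated $\mathbb{E}[S_f-S_{f^G}]$ exactly under a pointwise coupling $S_{f^G}\le S_f$.
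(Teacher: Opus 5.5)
Your lower bound follows the paper's argument (Rockafellar--Uryasev plus the stop-loss form of Theorem~\ref{thm:icx}) and is correct. For the upper bound, the obstacle you ran into is real and cannot be overcome, because under the hypotheses of Theorem~\ref{thm:icx} the stated inequality is false. Here is a counterexample:
\begin{itemize}
\item Let $G=\{e,r\}\cong\mathbb{Z}_2$ act on $\mathcal{X}=\{-1,1\}$ and $\mathcal{Y}=\mathbb{R}$ by sign flips.
\item Take $s(\hat y,y)=|\hat y-y|$, which is $G$-invariant and convex in $\hat y$.
\item Let $P$ be uniform on $\{(1,1),(-1,-1)\}$. It is $G$-invariant, so i.i.d.\ draws satisfy the exchangeability hypothesis.
\item Let $f(x)=2x+c$ with $0<c<1$.
\end{itemize}
Then $f^G(x)=\tfrac12\big(f(x)-f(-x)\big)=2x$, so $S_{f^G}\equiv 1$, whereas $S_f$ takes the values $1\pm c$ with probability $\tfrac12$ each. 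Hence $\mathbb{E}[S_f-S_{f^G}]=0$, but $\mathrm{CVaR}_\alpha(S_f)-\mathrm{CVaR}_\alpha(S_{f^G})=\min\{c,\,c\alpha/(1-\alpha)\}>0$ for every $\alpha\in(0,1)$. The mechanism is the one you sensed. The structure available is $S_{f^G}\le\Pi_G[s;f](X,Y)$ pointwise, with $\Pi_G[s;f](X,Y)$ an orbit average of $S_f$. Averaging raises the lower tail, so when Jensen is tight the mean gap vanishes while the CVaR gap does not.

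The paper reaches the stated bound only through a sign error. It writes $(1-\alpha)$ times the CVaR gap as $\int_0^1(F_{S_f}^{-1}-F_{S_{f^G}}^{-1})\,du-\int_0^\alpha(F_{S_f}^{-1}-F_{S_{f^G}}^{-1})\,du$, asserts that the second integral is $\le 0$, and drops it. But discarding a subtracted nonpositive term makes the expression smaller, so that step would give the reverse inequality. What is actually needed is $\int_0^\alpha F_{S_{f^G}}^{-1}\,du\le\int_0^\alpha F_{S_f}^{-1}\,du$. This lower-tail (increasing concave) comparison is, for fixed $\alpha$, equivalent to the claimed upper bound. Neither $S_{f^G}\preceq_{\mathrm{icx}}S_f$ nor Lemma~\ref{lemma:2.2} (which only controls $\int_p^1$) provides it, and in the example it fails.

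Your fallback is the correct repair. Since $(a-t)_+-(b-t)_+\le(a-b)_+$ and both scores are functions of the same $(X,Y)$, evaluating the Rockafellar--Uryasev objective at $t^\ast$ yields $\mathrm{CVaR}_\alpha(S_f)-\mathrm{CVaR}_\alpha(S_{f^G})\le\mathbb{E}[(S_f-S_{f^G})_+]/(1-\alpha)$. In the example this bound equals $c/(2(1-\alpha))$ and is attained at $\alpha=\tfrac12$. The stated form with $\mathbb{E}[S_f-S_{f^G}]$ requires an extra hypothesis, such as $S_{f^G}\le S_f$ almost surely. More generally, first-order stochastic dominance of $S_f$ over $S_{f^G}$ suffices, since it orders the quantile functions pointwise and makes the dropped integral nonnegative. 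So present your result as the lower bound together with this corrected upper bound, rather than trying to close the chain to $\mathbb{E}[S_f-S_{f^G}]$.
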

Unfortunately, it is impossible to order $\mathrm{VaR}_\alpha$, which is equivalent to point-wise ordering of the quantiles, without further assumptions on the distributions to attain stochastic dominance. However, Theorem \ref{theorem:cvar_ordering} allows for ordering the tail masses of the distributions, or equivalently:
\begin{lemma}\label{lemma:2.2}
Let $S_f$ and $S_{f^G}$ be two random variables with cumulative distribution functions 
$F_f$ and $F_{f^G}$, and quantile functions $F_{f}^{-1}$ and $F_{f^G}^{-1}$, respectively. If $S_{f^G}\preceq_{\mathrm{icx}} S_{f}$, then:
\[
\int_p^1 F_{f^G}^{-1}(t)\,dt \leq \int_p^1 F_{f}^{-1}(t)\,dt \quad \forall p \in (0,1).
\]
\end{lemma}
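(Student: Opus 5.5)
We will prove the statement through the stop-loss characterization of $\preceq_{\mathrm{icx}}$, combined with the variational (Rockafellar--Uryasev) representation of the upper tail integral of a quantile function. The key identity is that for any integrable random variable $S$ with quantile function $F_S^{-1}$ and any $p\in(0,1)$,
\begin{equation*}
\int_p^1 F_S^{-1}(t)\,dt \;=\; \min_{\eta\in\mathbb{R}}\Big\{(1-p)\,\eta + \mathbb{E}\big[(S-\eta)_+\big]\Big\},
\end{equation*}
with the minimum attained at $\eta=F_S^{-1}(p)$.

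\textbf{Step 1: derive the identity.} Represent $S\overset{d}{=}F_S^{-1}(U)$ with $U$ uniform on $(0,1)$, so that $\mathbb{E}[(S-\eta)_+]=\int_0^1 (F_S^{-1}(t)-\eta)_+\,dt$. Define the function
\[
\eta\mapsto (1-p)\eta+\int_0^1 (F_S^{-1}(t)-\eta)_+\,dt.
\]
It is convex in $\eta$, and its subdifferential contains $0$ exactly when $\eta$ is a $p$-quantile. Substituting $\eta=F_S^{-1}(p)$, we split the integral at $t=p$ and use the monotonicity of $F_S^{-1}$. The positive part vanishes on $(0,p)$ and equals $F_S^{-1}(t)-\eta$ on $(p,1)$. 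Hence the value is $(1-p)\eta+\int_p^1 F_S^{-1}(t)\,dt-(1-p)\eta=\int_p^1F_S^{-1}(t)\,dt$. Integrability of $S_f$ (assumed in the definition of icx order and, for the positive part, in Theorem~\ref{theorem:cvar_ordering}) guarantees that everything is finite.

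\textbf{Step 2: conclude.} By the hypothesis $S_{f^G}\preceq_{\mathrm{icx}}S_f$ and the stop-loss form of the definition, $\mathbb{E}[(S_{f^G}-\eta)_+]\le\mathbb{E}[(S_f-\eta)_+]$ for every $\eta$. Evaluate the objective for $S_{f^G}$ at the particular point $\eta^\star=F_f^{-1}(p)$, which is the minimizer for $S_f$. This gives
\[
\int_p^1 F_{f^G}^{-1}\le (1-p)\eta^\star+\mathbb{E}[(S_{f^G}-\eta^\star)_+]\le (1-p)\eta^\star+\mathbb{E}[(S_f-\eta^\star)_+]=\int_p^1F_f^{-1},
\]
which is the claim. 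Equivalently, after dividing by $1-p$, this is exactly the CVaR ordering of Theorem~\ref{theorem:cvar_ordering}, so one could also cite that theorem directly. We prefer the self-contained route because it uses only the icx hypothesis and not the symmetrization structure.

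\textbf{Main obstacle.} The only delicate point is Step 1: justifying the minimization identity when $F_S$ has atoms or flat pieces, since the quantile is then non-unique. We handle this by working with the left-continuous generalized inverse and the quantile representation $F_S^{-1}(U)$. With that choice the computation is valid for any choice of $p$-quantile, so ties cause no trouble. Boundary behavior as $t\to1$ is controlled by $\mathbb{E}[S^+]<\infty$.
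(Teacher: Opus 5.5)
Your proposal is correct and follows essentially the same route as the paper, which also combines the stop-loss form of $\preceq_{\mathrm{icx}}$ with the variational identity $\int_p^1 F^{-1}(u)\,du=\inf_{t}\{\mathbb{E}[(S-t)_+]+t(1-p)\}$. The only difference is cosmetic: the paper takes the infimum on both sides, whereas you evaluate at the minimizer $F_f^{-1}(p)$ of the larger side. Your Step~1 also supplies the justification of that identity, including atoms and flat pieces, which the paper merely asserts. You are right not to lean on Theorem~\ref{theorem:cvar_ordering}, since it is stated under the symmetrization hypotheses and its proof in turn appeals to this lemma.
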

\begin{cor}\label{cor:expetced_quantile_contraction}
For $S_{f^G}\preceq_{\mathrm{icx}} S_{f}$, the calibration quantiles of the symmetrized predictor contract in expectation, i.e.
\[
\mathbb{E}\!\left[F^{-1}_{S_{f^G}}(U)\right]
\;\le\;
\mathbb{E}\!\left[F^{-1}_{S_f}(U)\right],\quad U\sim \mathrm{Unif}(0,1)
\]
If $S_f, S_{f^G}$ are continuous random variables with interval support, for any increasing convex function $\nu$, it stands that:
\[
\mathbb{E}\left[ \nu\big(F_{f^G}^{-1} \circ F_{f}(S_f)\big) \,\middle|\, S_f > t \right]
   \leq 
   \mathbb{E}\left[ \nu(S_f) \,\middle|\, S_f > t \right],
   \quad \forall t \in \mathbb{R}
\]
\end{cor}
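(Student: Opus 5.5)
The first claim is the case $\nu(t)=t$ of the icx ordering, and the second reduces to Lemma~\ref{lemma:2.2} after writing the conditional expectations as tail integrals of quantile functions.

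\emph{First claim.} For $U\sim\mathrm{Unif}(0,1)$ the quantile transform gives $F^{-1}_{S}(U)\overset{d}{=}S$ for any real random variable $S$. Hence $\mathbb{E}[F^{-1}_{S_{f^G}}(U)]=\mathbb{E}[S_{f^G}]$ and $\mathbb{E}[F^{-1}_{S_f}(U)]=\mathbb{E}[S_f]$. The inequality is then the icx ordering applied to the increasing convex map $\nu(t)=t$. Equivalently, it is Lemma~\ref{lemma:2.2} in the limit $p\downarrow 0$, by monotone or dominated convergence, which is legitimate because both scores are integrable.

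\emph{Second claim, reduction to quantile integrals.} Assume $S_f$ is continuous with interval support, and put $p:=F_f(t)$.
\begin{itemize}
\item Then $U:=F_f(S_f)\sim\mathrm{Unif}(0,1)$ and $S_f=F_f^{-1}(U)$ almost surely.
\item Because $F_f$ is strictly increasing on the support, the event $\{S_f>t\}$ coincides almost surely with $\{U>p\}$.
\end{itemize}
For $p<1$ this gives
\[
\mathbb{E}\big[\nu(F_{f^G}^{-1}(F_f(S_f)))\,\big|\,S_f>t\big]=\frac{1}{1-p}\int_p^1 \nu\big(F_{f^G}^{-1}(u)\big)\,du,
\qquad
\mathbb{E}\big[\nu(S_f)\,\big|\,S_f>t\big]=\frac{1}{1-p}\int_p^1 \nu\big(F_{f}^{-1}(u)\big)\,du .
\]
When $p=1$ the conditioning event is null, and the statement is vacuous or understood by convention. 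When $p=0$ it reduces to the first claim.

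\emph{Second claim, comparing the integrals.} It remains to show $\int_p^1\nu(F_{f^G}^{-1})\le\int_p^1\nu(F_f^{-1})$. I would proceed in three steps.
\begin{itemize}
\item \emph{Stability of icx.} If $S_{f^G}\preceq_{\mathrm{icx}}S_f$ and $\nu$ is increasing convex, then $\nu(S_{f^G})\preceq_{\mathrm{icx}}\nu(S_f)$. The reason is that for any increasing convex $\eta$, the composition $\eta\circ\nu$ is again increasing convex.
\item \emph{Quantiles commute with $\nu$.} A convex $\nu:\mathbb{R}\to\mathbb{R}$ is continuous, and it is nondecreasing. Hence the left-continuous quantile satisfies $F^{-1}_{\nu(S)}(u)=\nu(F_S^{-1}(u))$ for almost every $u$.
\item \emph{Conclusion.} Applying Lemma~\ref{lemma:2.2} to the pair $\nu(S_{f^G}),\nu(S_f)$ gives the desired tail-integral inequality. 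Dividing by $1-p>0$ finishes the proof.
\end{itemize}

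\emph{Main obstacle.} The delicate step is justifying the quantile identity for $\nu$ with flat stretches, where $\nu$ is not strictly increasing. The identity may fail on a $u$-set, and I would check that this set is Lebesgue-null using continuity and monotonicity of $\nu$. The other issue is integrability of $\nu(S_f)$: I would assume $\mathbb{E}[\nu(S_f)^+]<\infty$, as is implicit in the icx definition, since otherwise both sides may be $+\infty$ and the inequality holds trivially.
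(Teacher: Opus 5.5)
Your proposal is correct and follows the paper's proof essentially step for step: the first claim comes from $\nu(t)=t$ together with $F^{-1}_S(U)\overset{d}{=}S$, and the second from $\nu(S_{f^G})\preceq_{\mathrm{icx}}\nu(S_f)$, the identity $F^{-1}_{\nu(S)}=\nu\circ F^{-1}_S$, Lemma~\ref{lemma:2.2}, and the probability integral transform $F_f(S_f)\sim\mathrm{Unif}(0,1)$ to turn tail integrals into conditional expectations. Your final step, using $\{S_f>t\}=\{F_f(S_f)>F_f(t)\}$ a.s.\ directly, is cleaner than the paper's detour through conditioning on $\{\nu(S)>\nu(F^{-1}(p))\}$, which is inexact when $\nu$ is flat where $S$ has mass; also, the quantile identity you flag as delicate holds for every $u\in(0,1)$, not just almost every $u$, because $\nu$ is continuous and nondecreasing.
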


\section{Expected Shrinkage of Conformal Sets}
Consider $f_\theta:\mathcal{X}\rightarrow \mathcal{Y}$ trained on $\mathcal{D}_{\text{train}}$ and $\mathcal{D}_{\text{cal}}=\{(X_i,Y_i)\}_{i=1:n_c}$ calibration set drawn from a $G$-invariant distribution under exchangeability, and consider positive semi-definite, $G$-invariant, convex non-conformity score. From Split Conformal Prediction theory, for a test sample $(X_{n_c+1},Y_{n_c+1})$ it stands that $$P\big(Y_{n_c+1}\in C_{1-\alpha}^{f_\theta}(X_{n_c+1})\big)\ge 1-\alpha$$
for prediction set $$C_{1-\alpha}^{f_\theta}(X_{n_c+1})=\{y\in\mathcal{Y}:s(f_\theta(X_{n_c+1}),Y_{n_c+1})\le F^{-1}_{S_{f_\theta}}(1-\alpha)\}$$ where $F^{-1}_{S_{f_\theta}}(1-\alpha):=\{p\in\mathbb{R}\mid F_{S_{f_\theta}}(x)\geq 1-\alpha\}$ the continuous and differentiable quantile function on the CDF $F_{S_{f_\theta}}$ from $\mathcal{D}_{\text{cal}}$.
From theorem \ref{thm:split-g-equivariant-validity} on Finite-Sample Validity of Equivariantized Split Conformal Prediction, the same probabilistic bound stands for the equivariantized predictor, but the conformal prediction set becomes $$C_{1-\alpha}^{f_\theta^G}(X_{n_c+1})=\{y\in\mathcal{Y}:s(f_\theta^G(X_{n_c+1}),Y_{n_c+1})\le F^{-1}_{S_{f_\theta^G}}(1-\alpha)\}$$

In this section we motivate the use of eCP compared to classic CP by showing that the expected volume of the conformal sets shrinks for the same miscoverage rate $\alpha\in(0,1)$, i.e. $\mathbb{E}[\mathrm{Vol}(C_{1-\alpha}^{f_\theta^G})] \leq \mathbb{E}[\mathrm{Vol}(C_{1-\alpha}^{f_\theta})]$.

\subsection{From quantile to volume contraction of conformal set}



Assuming that $s(f(x),y)=\|f(x)-y\|$ with a Euclidean or norm-induced geometry, $C_{1-\alpha}^{f_\theta}(x)$ is a ball (or convex set) centered at $f_\theta(x)$ with radius $F^{-1}_{S_{f_\theta}}(1-\alpha)$ of volume $\mathrm{Vol}(C_{1-\alpha}^{f_\theta}) = \kappa  F^{-1}_{S_{f_\theta}}(1-\alpha)^d$ and $\mathrm{Vol}(C_{1-\alpha}^{f_\theta^G}) = \kappa F^{-1}_{S_{f_\theta^G}}(1-\alpha)^d$. Let $p\sim U(1-\alpha,1)$. 
From the Mean Value Theorem $${F^{-1}_{S_{f_\theta}}(p)}^d -{F^{-1}_{S_{f_\theta^G}}(p)}^d = d q(p)^{d-1}({F^{-1}_{S_{f_\theta}}(p)}-{F^{-1}_{S_{f_\theta^G}}(p)})$$
for $q(p)\in\Big[ \min\{{F^{-1}_{S_{f_\theta}}(p)} ,{F^{-1}_{S_{f_\theta^G}}(p)}\}, \max\{{F^{-1}_{S_{f_\theta}}(p)} ,{F^{-1}_{S_{f_\theta^G}}(p)}\} \Big]$ and $\forall p \in (1-\alpha,1)$ it stands that 
\begin{align*}
    q_0:=&\min\{{F^{-1}_{S_{f_\theta}}(1-\alpha)} ,{F^{-1}_{S_{f_\theta^G}}(1-\alpha)}\} \\
    &\quad\leq q(p) \leq \max\{{F^{-1}_{S_{f_\theta}}(1)} ,{F^{-1}_{S_{f_\theta^G}}(1)}\}=:q_1
\end{align*}
with $q_0\geq 0$. Then, the expected shrinkage of the conformal regions for the upper quantiles is
\begin{align}
    D\mathrm{Vol}_{1-\alpha}&:=\mathbb{E}_{p\sim U(1-\alpha,1)}[\mathrm{Vol}(C_{p}^{f_\theta}) -\mathrm{Vol}(C_{p}^{f_\theta^G})] \nonumber\\
     &= \kappa \,\mathbb{E}_{p\sim U(1-\alpha,1)}\Big[{F^{-1}_{S_{f_\theta}}(p)}^d -{F^{-1}_{S_{f_\theta^G}}(p)}^d \Big] \nonumber\\
    &= d \,\kappa\, \mathbb{E}_{p\sim U(1-\alpha,1)}\Big[q(p)^{d-1}\big({F^{-1}_{S_{f_\theta}}(p)} -{F^{-1}_{S_{f_\theta^G}}(p)} \big)\Big] \nonumber\\
    \Rightarrow d \,q_0^{d-1}\,&\kappa\, \mathbb{E}_{p\sim U(1-\alpha,1)}\Big[{F^{-1}_{S_{f_\theta}}(p)} -{F^{-1}_{S_{f_\theta^G}}(p)} \Big] \leq D\mathrm{Vol}_{1-\alpha} \nonumber\\
    &\leq d \,q_1^{d-1}\,\kappa\, \mathbb{E}_{p\sim U(1-\alpha,1)}\Big[{F^{-1}_{S_{f_\theta}}(p)} -{F^{-1}_{S_{f_\theta^G}}(p)} \Big] 
    \label{eq:bound_exp_vol_1}
\end{align}
Observe that the upper quantiles can be restated as $\mathrm{CVaR}_\alpha(S_{f_\theta}) = \mathbb{E}_{p\sim U(0,1)}[F^{-1}_{S_{f_\theta}}(p)\,|\,p\geq 1-\alpha] $ and $\mathrm{CVaR}_\alpha(S_{f_\theta^G}) = \mathbb{E}_{p\sim U(0,1)}[F^{-1}_{S_{f_\theta^G}}(p)\,|\,p\geq 1-\alpha]$, and \ref{eq:bound_exp_vol_1} becomes
\begin{align}
    d \,q_0^{d-1}\,\kappa\, \big[\mathrm{CVaR}_\alpha(S_{f_\theta})- \mathrm{CVaR}_\alpha(S_{f_\theta^G})\big] \leq D\mathrm{Vol}_{1-\alpha} \nonumber\\ \leq d \,q_1^{d-1}\,\kappa\, \big[\mathrm{CVaR}_\alpha(S_{f_\theta})- \mathrm{CVaR}_\alpha(S_{f_\theta^G})\big]
    \label{eq:bound_exp_vol_2}
\end{align}
Applying Theorem \ref{theorem:cvar_ordering} in \ref{eq:bound_exp_vol_2} yields $\forall\alpha\in(0,1)$
\begin{align}
    0 &\leq \mathbb{E}_{p\sim U(1-\alpha,1)}[\mathrm{Vol}(C_{p}^{f_\theta}) -\mathrm{Vol}(C_{p}^{f_\theta^G})] \nonumber\\ &\leq \dfrac{d\,\kappa}{1-\alpha} \max\{S_{f_\theta},S_{f_\theta^G}\}^{d-1}\, \mathbb{E}[S_{f_\theta}-S_{f_\theta^G}]
\end{align}

\section{\textsc{Experiments}}
\begin{figure*}[h!]
\centering

\begin{subfigure}[t]{0.5\linewidth}
  \centering
  \includegraphics[width=\linewidth]{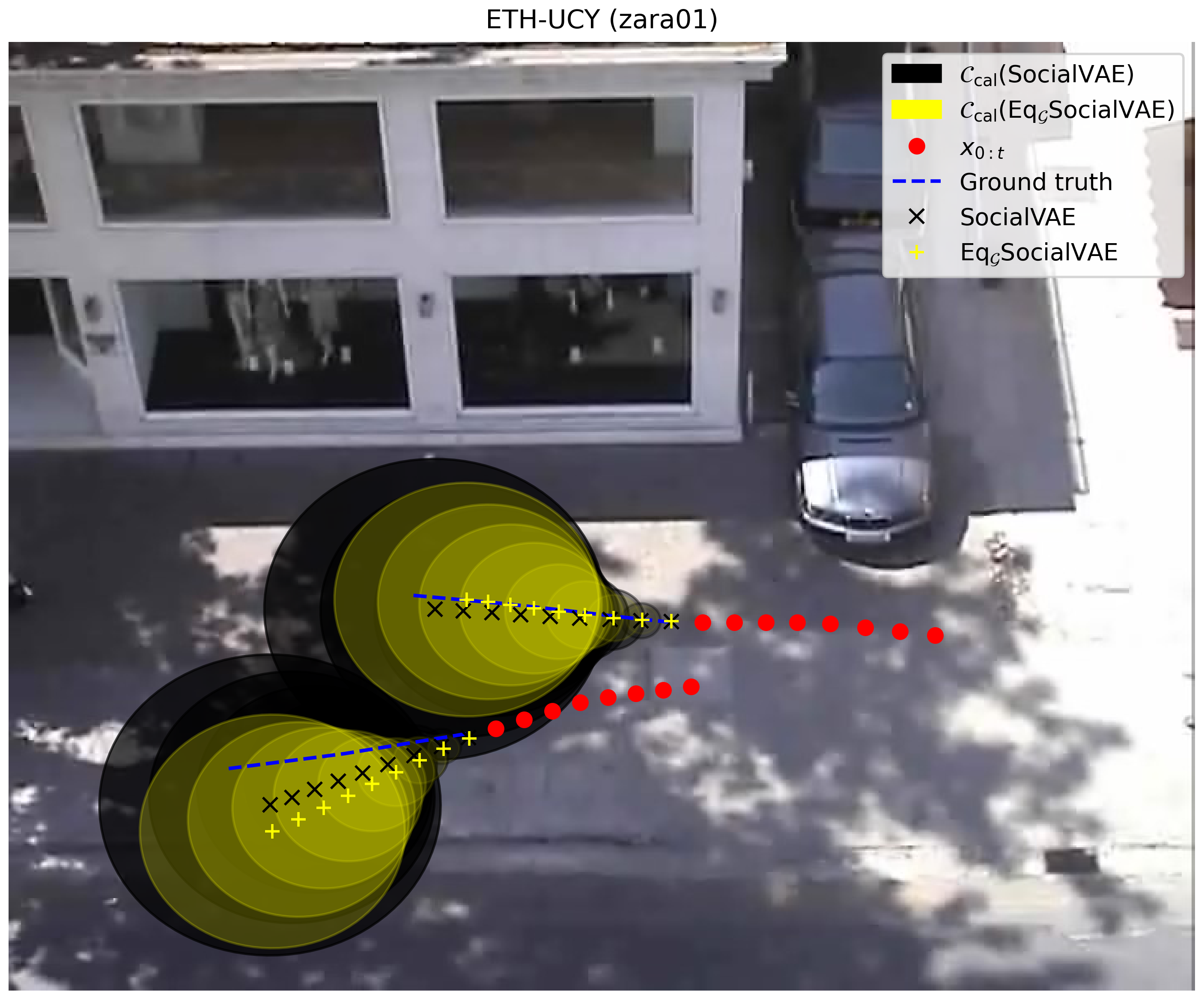}
\end{subfigure}\hfill
\begin{subfigure}[t]{0.5\linewidth}
  \centering
  \includegraphics[width=\linewidth]{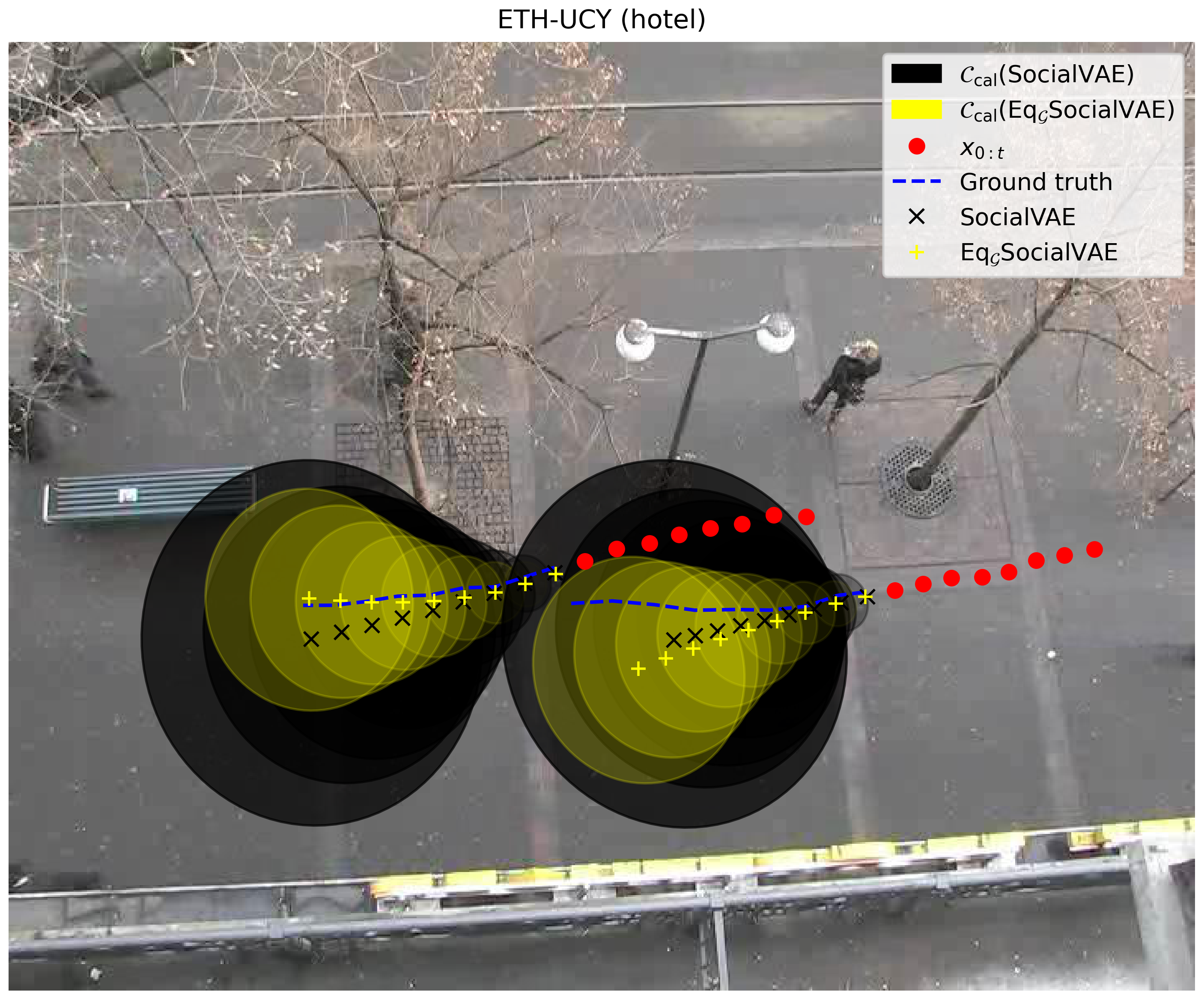}
\end{subfigure}\hfill
\begin{subfigure}[t]{0.5\linewidth}
  \centering
  \includegraphics[width=\linewidth]{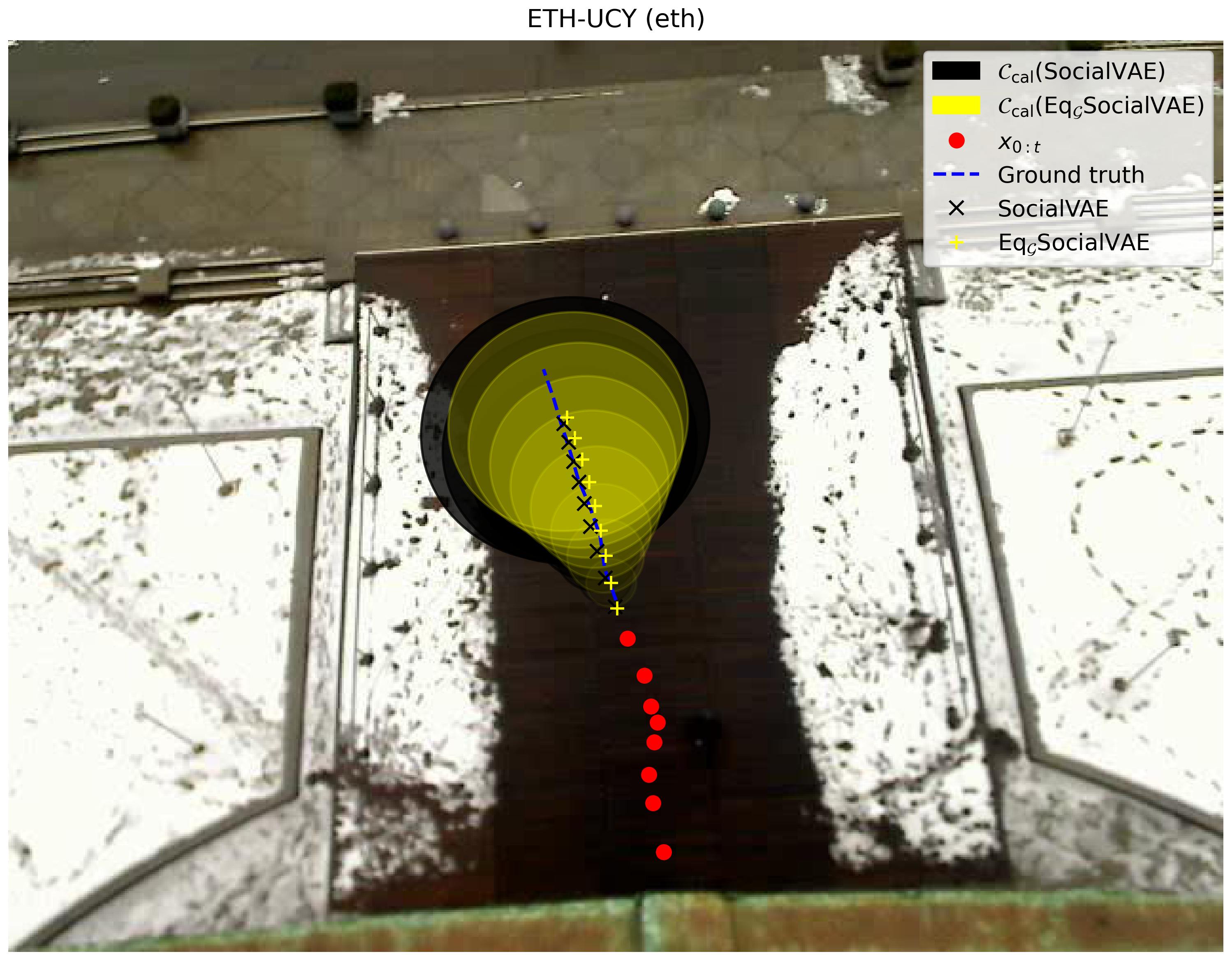}
\end{subfigure}\hfill
\begin{subfigure}[t]{0.5\linewidth}
  \centering
  \includegraphics[width=\linewidth]{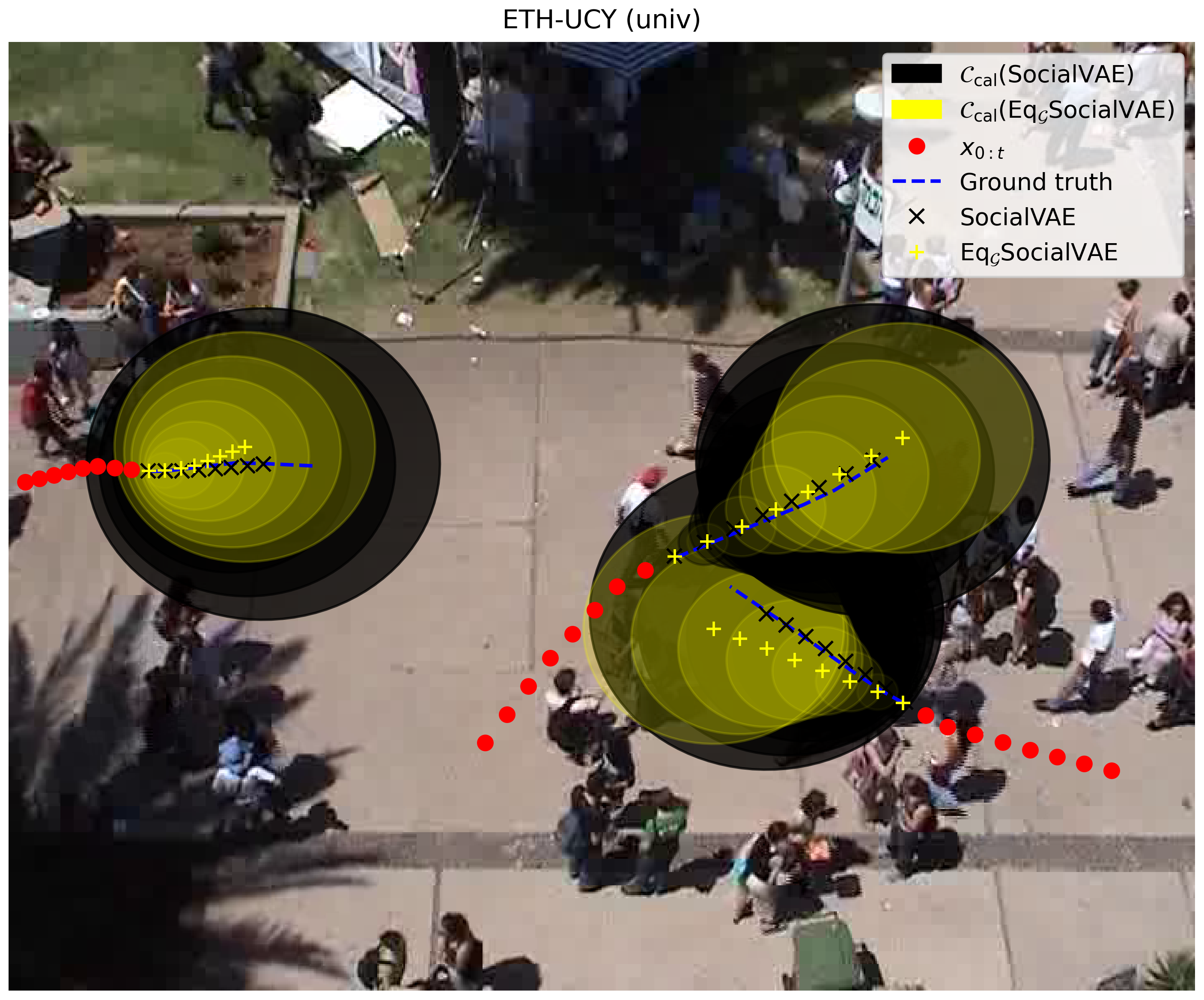}
\end{subfigure}\hfill
\begin{subfigure}[t]{0.33\linewidth}
  \centering
  \includegraphics[width=\linewidth]{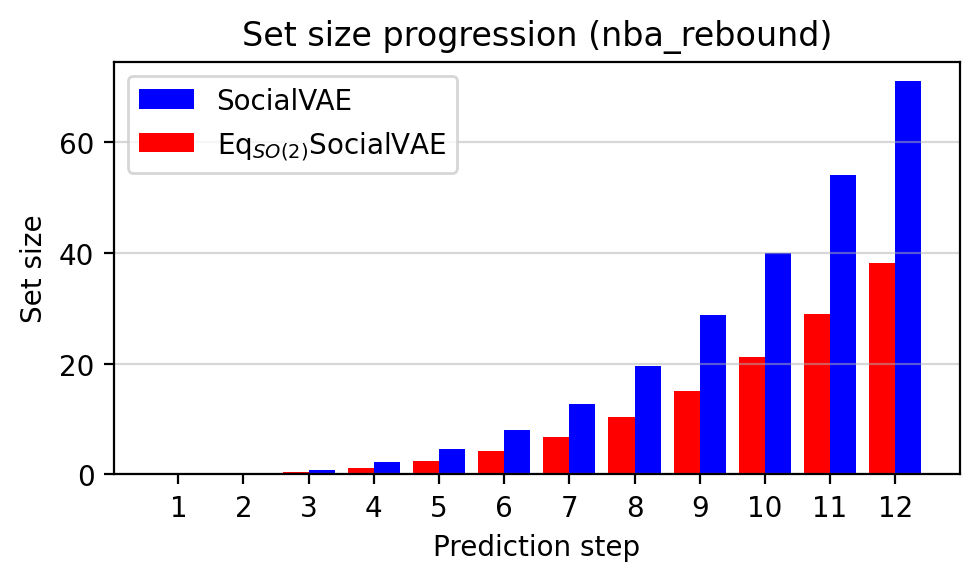}
\end{subfigure}\hfill
\begin{subfigure}[t]{0.33\linewidth}
  \centering
  \includegraphics[width=\linewidth]{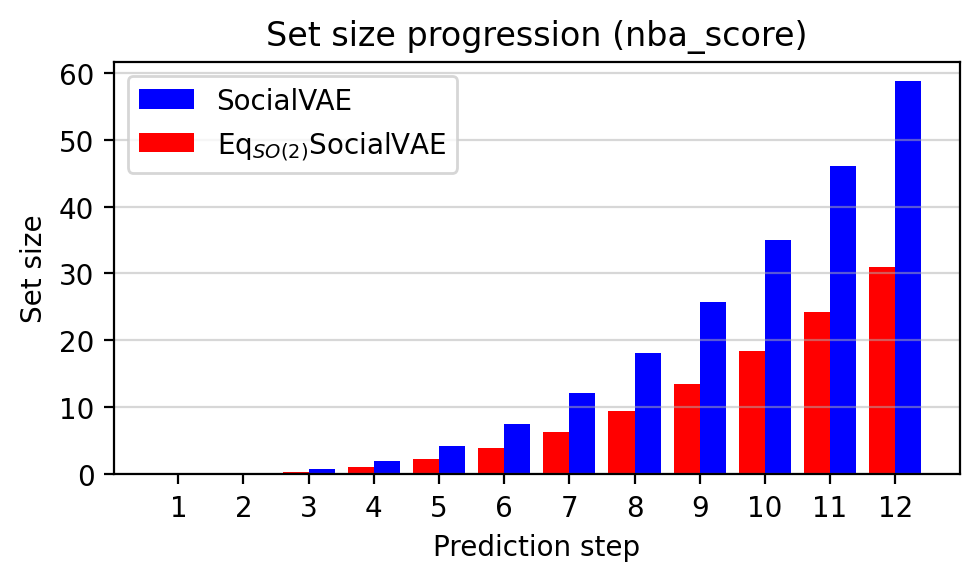}
\end{subfigure}
\begin{subfigure}[t]{0.33\linewidth}
  \centering
  \includegraphics[width=\linewidth]{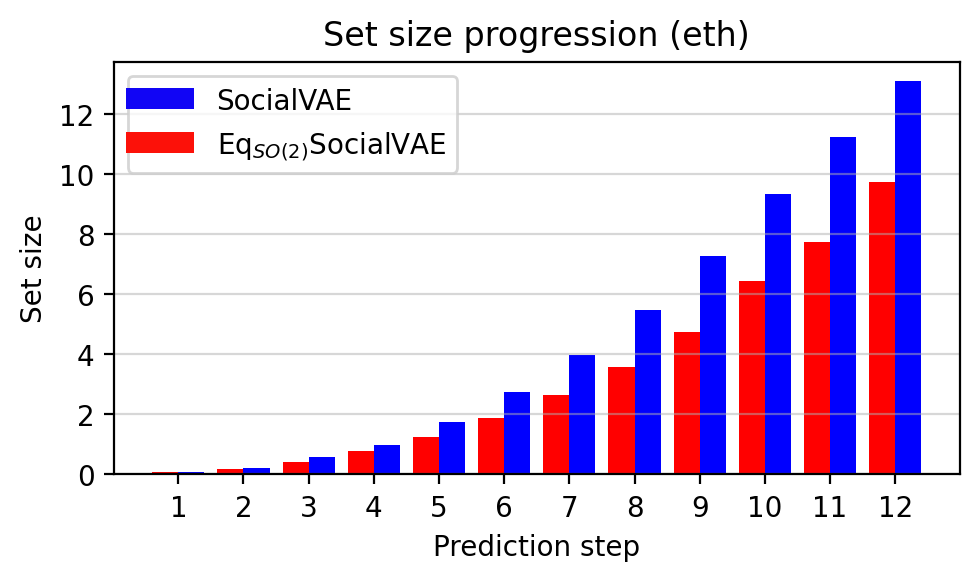}
\end{subfigure}
\caption{Set size reduction via symmetrization of Conformal prediction in multi-step prediction in ETH-UCY and NBA Rebound/Score datasets.}
\label{fig:set_size}
\end{figure*}

We evaluate Equivariantized Conformal Prediction (eCP) on long-horizon pedestrian trajectory prediction, a domain characterized by strong geometric symmetries and rapidly accumulating uncertainty. The experiments are designed to validate the theoretical results of Section~4, namely that equivariantization contracts the distribution of nonconformity scores and yields tighter conformal prediction sets while preserving finite-sample coverage guarantees—especially at high confidence levels. We conduct experiments on standard pedestrian trajectory prediction benchmarks: ETH--UCY~\cite{pellegrini2009youll}, the Stanford Drone Dataset~\cite{robicquet2016learning} and the SportVU NBA movement dataset. These datasets are widely used to evaluate uncertainty in multi-agent, long-horizon forecasting and exhibit planar rotational symmetries. Following standard protocols, each input consists of an observed pedestrian trajectory of 8 timesteps, and the task is to predict future positions over a 12-step horizon. Uncertainty is quantified by constructing conformal prediction sets over future trajectories. 
\begin{figure*}[h!]
    \centering
    \includegraphics[width=\linewidth]{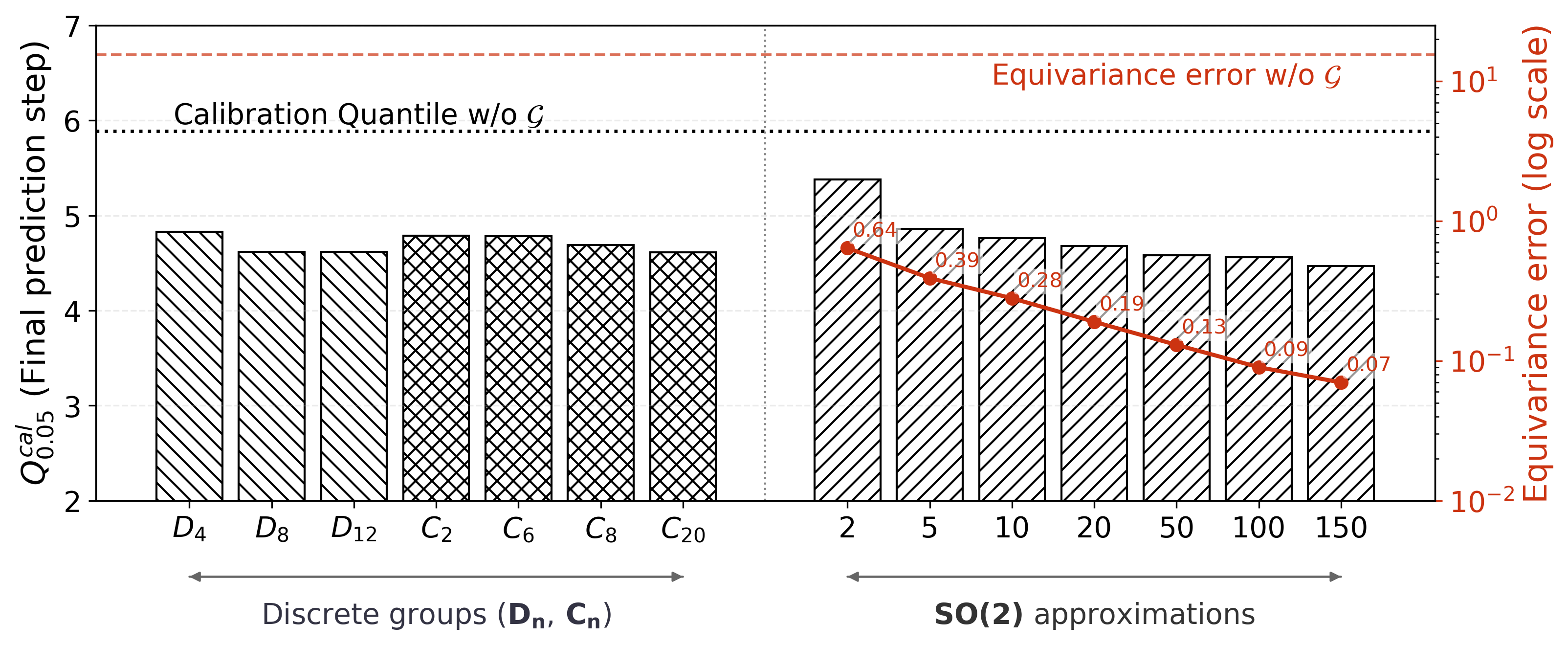}
    \caption{Ablation studies on the group size and approximate equivariance.}
    \label{fig:group_ablation}
\end{figure*}
We evaluate eCP on top of two pretrained trajectory predictors with fundamentally different modeling assumptions: 1) \textbf{SocialVAE}~\cite{socialvae2022}: a stochastic latent-variable model for multimodal trajectory prediction, 2)\textbf{TUTR}~\cite{zhao2021tutr}: a transformer-based deterministic trajectory predictor. No retraining or fine-tuning is performed. All symmetry injection is applied strictly \emph{post hoc}, demonstrating the model-agnostic nature of eCP. We consider planar rotation symmetry groups of increasing richness $\mathcal{G} \in \{C_4, C_8, \mathrm{SO}(2)\}$. 
Since the orbit-averaging integral in~\eqref{eq:equivariantized_predictor} is generally intractable, we implement model equivariantization by numerical group averaging. For finite groups $G$, we compute the exact discrete average
\[
f_\theta^G(x)=\frac{1}{|G|}\sum_{g\in G}\psi_g\!\bigl(f_\theta(\phi_{g^{-1}}(x))\bigr).
\]
For continuous groups, we approximate the Haar integral by Monte Carlo sampling:
\[
f_\theta^{G}(x) \approx \frac{1}{m}\sum_{j=1}^m \psi_{g_j}\!\bigl(f_\theta(\phi_{g_j^{-1}}(x))\bigr),
\qquad g_j \overset{i.i.d.}{\sim} \mu_G.
\]
We then apply conformal calibration using the approximate equivariantized predictor $f_\theta^{G,m}$. The resulting methods are denoted EqC$_4$, EqC$_8$, and EqSO$(2)$, respectively. We employ split conformal prediction with Euclidean displacement error as the nonconformity score. Calibration is performed on a held-out set drawn under exchangeability assumptions, and all results are averaged over 15 random calibration splits. We report two primary metrics that directly capture the efficiency--validity trade-off predicted by theory: 1) \textbf{Calibration Quantile $Q_{0.05}$}: the empirical 95\% conformal radius (lower is better), 2) \textbf{Empirical Coverage (Cov$_{95\%}$)}: the fraction of test trajectories contained in the conformal prediction set (target: 95\%).
\begin{table*}[h!]
\centering
\setlength{\tabcolsep}{6pt}
\renewcommand{\arraystretch}{1.15}


\begin{adjustbox}{width=\textwidth}
\begin{tabular}{lcccccc}
\toprule
& \multicolumn{2}{c}{\textbf{ETH}}
& \multicolumn{2}{c}{\textbf{SDD}}
& \multicolumn{2}{c}{\textbf{HOTEL}} \\
\cmidrule(lr){2-3}\cmidrule(lr){4-5}\cmidrule(lr){6-7}
\textbf{Method}
& $Q_{0.05}\downarrow$ & Cov$_{95\%}\uparrow$
& $Q_{0.05}\downarrow$ & Cov$_{95\%}\uparrow$
& $Q_{0.05}\downarrow$ & Cov$_{95\%}\uparrow$ \\
\midrule

SocialVAE & 3.47±0.05 & 94.94±0.36 & 5.19±0.43 & 94.76±1.52 & 3.39±0.05 & 94.99±0.34 \\
Eq$_{\text{C4}}$SocialVAE & 2.77±0.07 & 94.91±0.52 & 4.13±0.44 & 94.95±1.89 & 2.67±0.06 & 94.94±0.43 \\
Eq$_{\text{C8}}$SocialVAE & 2.67±0.06 & 94.98±0.41 & 3.93±0.31 & 94.88±1.58 & 2.56±0.06 & 95.04±0.43 \\
Eq$_{\text{SO(2)}}$SocialVAE & 2.57±0.07 & 94.98±0.47 & 3.87±0.36 & 94.79±1.67 & 2.45±0.04 & 94.98±0.40 \\

\midrule
TUTR & 9.84±0.42 & 94.09±5.00 & 125.06±5.16 & 95.11±1.47 & 3.00±0.08 & 94.82±2.12 \\
Eq$_{\text{C4}}$TUTR & 6.67±1.38 & 95.02±3.88 & 121.41±12.17 & 94.94±1.34 & 2.02±0.38 & 94.86±2.92 \\
Eq$_{\text{C8}}$TUTR & 6.07±0.71 & 94.52±4.01 & 120.41±13.91 & 94.93±1.72 & 2.59±0.37 & 94.47±2.97 \\
Eq$_{\text{SO(2)}}$TUTR & 5.35±0.94 & 94.64±5.36 & 119.50±5.12 & 95.14±1.68 & 2.17±0.30 & 94.92±2.31 \\

\bottomrule
\end{tabular}
\end{adjustbox}

\vspace{1.2em}


\begin{adjustbox}{width=\textwidth}
\begin{tabular}{lcccccc}
\toprule
& \multicolumn{2}{c}{\textbf{UNIV}}
& \multicolumn{2}{c}{\textbf{ZARA$_1$}}
& \multicolumn{2}{c}{\textbf{ZARA$_2$}} \\
\cmidrule(lr){2-3}\cmidrule(lr){4-5}\cmidrule(lr){6-7}
\textbf{Method}
& $Q_{0.05}\downarrow$ & Cov$_{95\%}\uparrow$
& $Q_{0.05}\downarrow$ & Cov$_{95\%}\uparrow$
& $Q_{0.05}\downarrow$ & Cov$_{95\%}\uparrow$ \\
\midrule

SocialVAE & 3.81±0.05 & 95.03±0.27 & 3.54±0.06 & 94.98±0.42 & 3.61±0.08 & 94.86±0.51 \\
Eq$_{\text{C4}}$SocialVAE & 2.96±0.07 & 94.93±0.54 & 2.77±0.08 & 94.86±0.55 & 2.83±0.07 & 94.91±0.51 \\
Eq$_{\text{C8}}$SocialVAE & 2.84±0.05 & 94.90±0.43 & 2.67±0.06 & 94.93±0.48 & 2.71±0.07 & 94.98±0.55 \\
Eq$_{\text{SO(2)}}$SocialVAE & 2.78±0.07 & 94.95±0.53 & 2.57±0.06 & 94.88±0.52 & 2.61±0.05 & 94.90±0.43 \\

\midrule
TUTR & 3.39±0.09 & 95.01±0.70 & 2.67±0.41 & 95.03±1.74 & 2.78±0.13 & 94.74±0.93 \\
Eq$_{\text{C4}}$TUTR & 3.18±0.09 & 95.03±0.56 & 2.60±0.24 & 94.92±1.46 & 2.75±0.32 & 94.71±1.50 \\
Eq$_{\text{C8}}$TUTR & 3.08±0.04 & 94.90±0.38 & 2.73±0.16 & 94.83±1.15 & 2.92±0.20 & 94.73±1.23 \\
Eq$_{\text{SO(2)}}$TUTR & 2.95±0.05 & 94.89±0.46 & 2.56±0.28 & 94.80±1.63 & 2.59±0.21 & 94.66±1.46 \\

\bottomrule
\end{tabular}
\end{adjustbox}

\caption{
Calibration quantile ($Q_{\alpha=0.05}$) and empirical coverage (Cov$_{95\%}$) for 15-split conformal prediction under symmetry groups 
$\mathcal{G}\in\{\mathrm{SO(2), C4, C8}\}$ across ETH-UCY and SDD datasets. Our methods are dubbed $\text{Eq}_{\{\mathcal{G}\}}f_\theta$.
Lower $Q_{0.05}$ is better ($\downarrow$); higher coverage is better ($\uparrow$).
}
\label{table:0.05quantile}
\end{table*}
Table~\ref{table:0.05quantile} summarizes results on ETH--UCY and SDD. Across all datasets and base predictors, equivariantized conformal prediction consistently reduces the calibration quantile by approximately $28\%$ while maintaining coverage close to the nominal level. Equivariantized predictors achieve substantial reductions in $Q_{0.05}$, often exceeding 20--30\% relative to standard conformal prediction. This confirms the predicted contraction of high quantiles under increasing convex order and CVaR dominance, as indicated by Figure \ref{fig:eight}. Despite significantly tighter prediction sets, empirical coverage remains close to the target 95\% across all datasets and models, validating the finite-sample guarantees of eCP. Performance improves monotonically with the richness of the symmetry group: EqSO$(2)$ typically outperforms EqC$_8$, which in turn outperforms EqC$_4$. This aligns with the interpretation of equivariantization as orbit averaging, where larger groups induce stronger variance and tail contraction. Both SocialVAE and TUTR benefit from equivariantization, demonstrating that eCP applies equally well to stochastic and deterministic predictors.

Figure~\ref{fig:set_size} illustrates representative prediction sets produced by standard CP and eCP. Equivariantized prediction regions are visibly tighter, particularly at long horizons, while still enclosing the ground-truth trajectories. These qualitative results mirror the quantitative improvements observed in Table~\ref{table:0.05quantile}. Notably, the largest gains occur at high confidence levels, precisely where standard conformal prediction becomes overly conservative. This empirically supports the Chernoff bound tightening, CVaR contraction, and extreme-quantile improvements established in Section \ref{section:Symmetries-infused Conformal Prediction} and Appendices \ref{section:chernov}, \ref{section:hoeffding}.

\begin{table*}[h!]
\centering
\setlength{\tabcolsep}{5.5pt}
\renewcommand{\arraystretch}{1.15}

\begin{adjustbox}{width=\textwidth}
\begin{tabular}{lccccccc}
\toprule
\textbf{Method}
& \textbf{ETH}
& \textbf{HOTEL}
& \textbf{UNIV}
& \textbf{ZARA$_1$}
& \textbf{ZARA$_2$}
& \textbf{NBA Rebound}
& \textbf{NBA Score} \\
\midrule


TUTR
& 1.83/3.71
& 0.44/0.90
& 0.64/1.36
& 0.43/0.93
& 0.34/0.75
& --
& -- \\

Eq$_{\text{C4}}$TUTR
& 1.15/2.29
& 0.34/\textbf{0.67}
& 0.59/1.26
& 0.43/0.94
& \textbf{0.32/0.71}
& --
& -- \\

Eq$_{\text{C8}}$TUTR
& 1.12/2.26
& 0.39/0.77
& 0.55/1.19
& 0.49/1.09
& 0.35/0.79
& --
& -- \\

Eq$_{\text{SO(2)}}$TUTR
& \textbf{1.08/2.19}
& \textbf{0.37}/0.73
& \textbf{0.54/1.16}
& \textbf{0.49/1.08}
& 0.34/0.77
& --
& -- \\

\midrule

SocialVAE
& 0.58/1.28
& 0.54/1.22
& 0.64/1.41
& 0.58/1.30
& 0.60/1.35
& 1.86/4.23
& 2.08/4.91 \\

Eq$_{\text{C4}}$SocialVAE
& 0.46/1.02
& 0.43/0.96
& 0.51/1.13
& 0.47/1.04
& 0.48/1.06
& \textbf{1.33}/3.02
& 1.50/3.54 \\

Eq$_{\text{C8}}$SocialVAE
& 0.43/0.96
& 0.41/0.91
& 0.48/1.06
& 0.44/0.98
& 0.45/1.00
& \textbf{1.33/3.01}
& \textbf{1.50/3.53} \\

Eq$_{\text{SO(2)}}$SocialVAE
& \textbf{0.41/0.92}
& \textbf{0.39/0.87}
& \textbf{0.47/1.02}
& \textbf{0.42/0.94}
& \textbf{0.43/0.96}
& \textbf{1.33/3.01}
& \textbf{1.50/3.53} \\

\bottomrule
\end{tabular}
\end{adjustbox}

\caption{
Accuracy comparison (ADE/FDE) on ETH-UCY and NBA (rebound and scoring).
Lower is better.
}
\label{table:ade_fde_clean}
\end{table*}

\begin{figure*}[h!]
\centering

\begin{subfigure}[t]{0.5\linewidth}
  \centering
  \includegraphics[width=\linewidth]{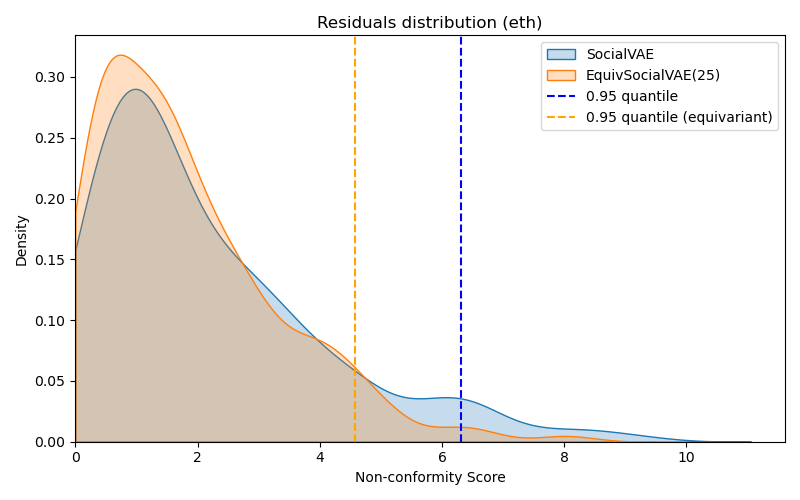}
\end{subfigure}\hfill
\begin{subfigure}[t]{0.5\linewidth}
  \centering
  \includegraphics[width=\linewidth]{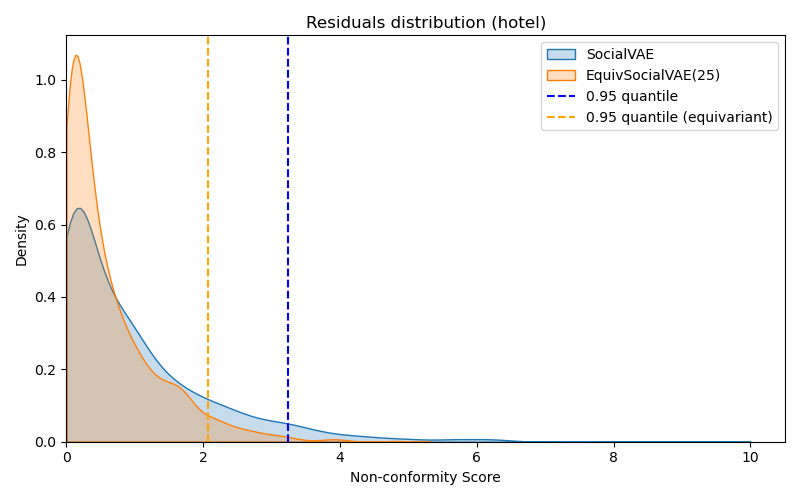}
\end{subfigure}\hfill
\begin{subfigure}[t]{0.5\linewidth}
  \centering
  \includegraphics[width=\linewidth]{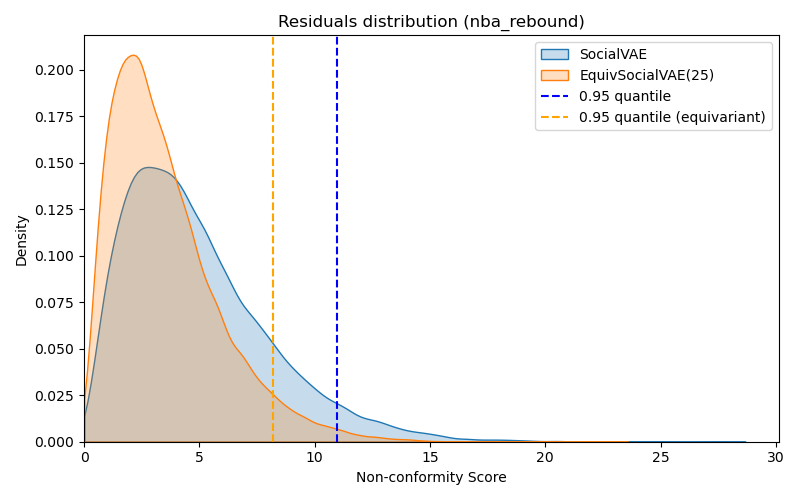}
\end{subfigure}\hfill
\begin{subfigure}[t]{0.5\linewidth}
  \centering
  \includegraphics[width=\linewidth]{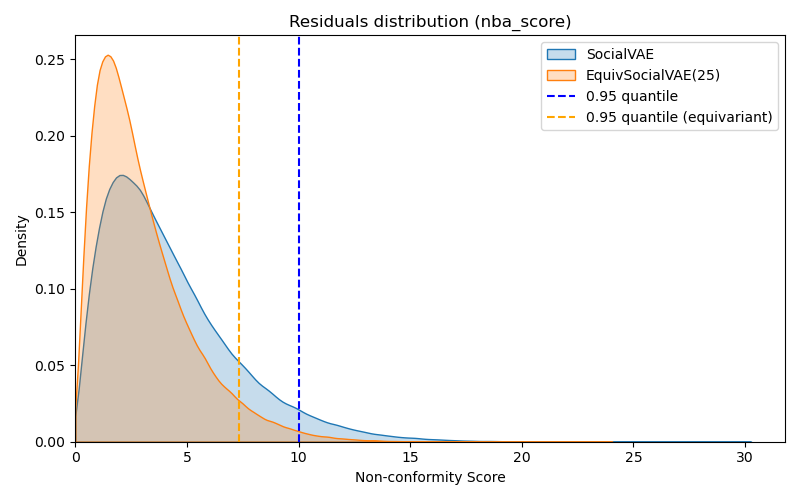}
\end{subfigure}
\begin{subfigure}[t]{0.5\textwidth}
  \centering
  \includegraphics[width=\linewidth]{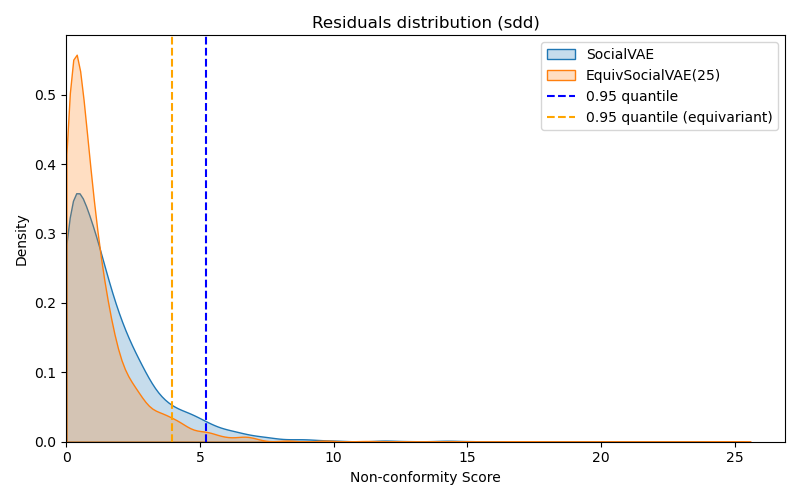}
\end{subfigure}\hfill
\begin{subfigure}[t]{0.5\textwidth}
  \centering
  \includegraphics[width=\linewidth]{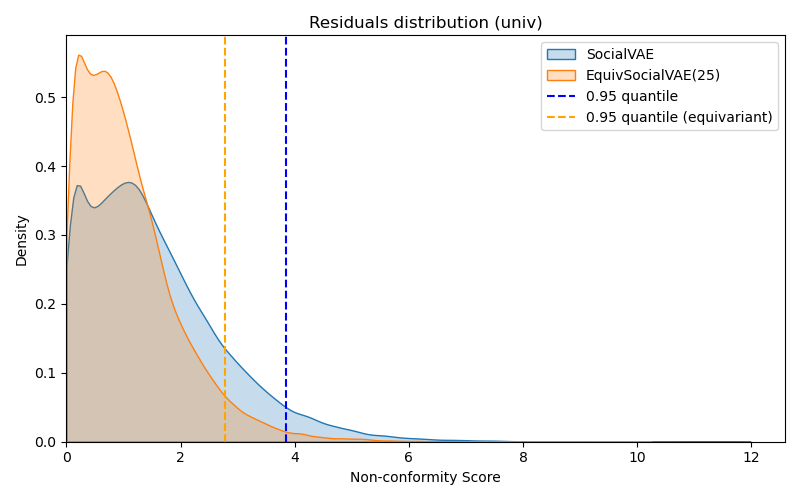}
\end{subfigure}\hfill
\begin{subfigure}[t]{0.5\textwidth}
  \centering
  \includegraphics[width=\linewidth]{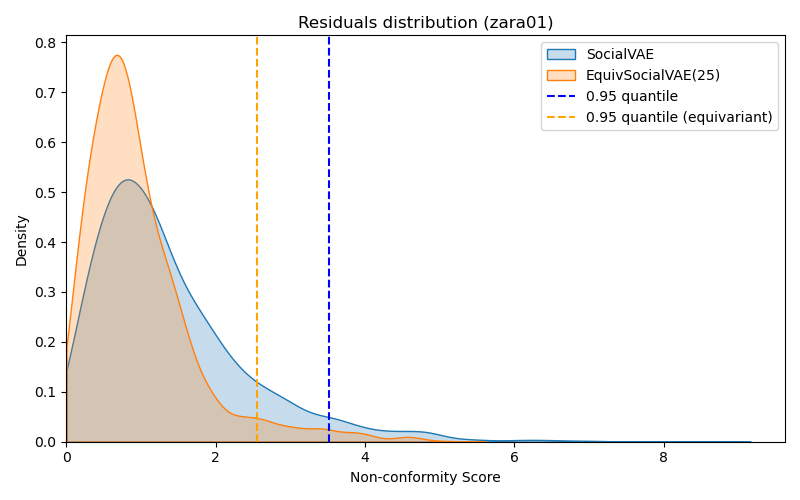}
\end{subfigure}\hfill
\begin{subfigure}[t]{0.5\textwidth}
  \centering
  \includegraphics[width=\linewidth]{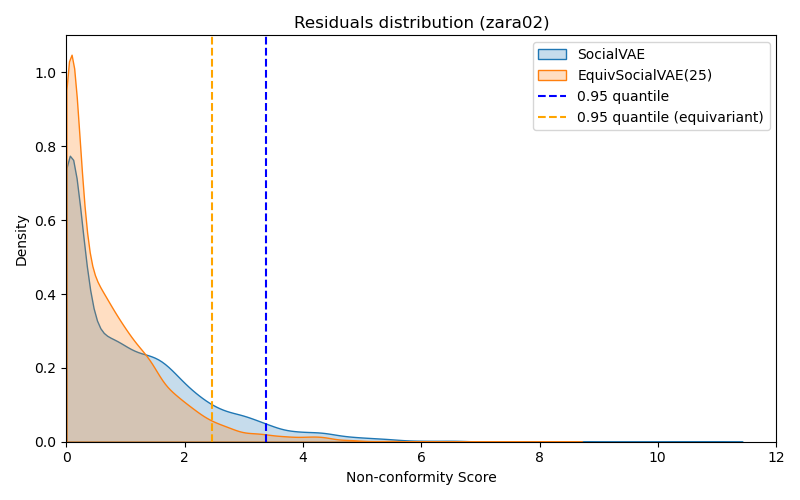}
\end{subfigure}

\caption{Non-conformity score distributions for SocialVAE and Eq${_{\text{SO2}}}$SocialVAE with $95^{\text{th}}$-quantile in dashed lines.}
\label{fig:eight}
\end{figure*}

\begin{table*}[h]
\centering
\setlength{\tabcolsep}{6pt}
\renewcommand{\arraystretch}{1.15}


\begin{adjustbox}{width=\textwidth}
\begin{tabular}{lcccccc}
\toprule
& \multicolumn{2}{c}{\textbf{ETH}}
& \multicolumn{2}{c}{\textbf{SDD}}
& \multicolumn{2}{c}{\textbf{HOTEL}} \\
\cmidrule(lr){2-3}\cmidrule(lr){4-5}\cmidrule(lr){6-7}
\textbf{Method}
& $Q_{0.01}\downarrow$ & Cov$_{99\%}\uparrow$
& $Q_{0.01}\downarrow$ & Cov$_{99\%}\uparrow$
& $Q_{0.01}\downarrow$ & Cov$_{99\%}\uparrow$ \\
\midrule

SocialVAE
& 5.14±0.19 & 98.98±0.21
& 9.22±2.60 & 98.89±0.60
& 4.93±0.26 & 98.98±0.29 \\

Eq$_{\text{C4}}$SocialVAE
& 4.26±0.16 & 98.97±0.24
& 7.08±1.59 & 98.93±0.64
& 4.10±0.21 & 98.97±0.30 \\

Eq$_{\text{C8}}$SocialVAE
& 4.06±0.13 & 98.97±0.20
& 6.57±0.93 & 98.88±0.74
& 3.98±0.16 & 98.99±0.23 \\

Eq$_{\text{SO(2)}}$SocialVAE
& 3.95±0.14 & 98.96±0.20
& 6.39±1.46 & 98.90±0.77
& 3.83±0.12 & 98.96±0.20 \\

\midrule
TUTR
& 11.51±3.29 & 97.88±2.12
& 208.32±61.59 & 98.81±1.05
& 4.72±0.15 & 98.98±0.21 \\

Eq$_{\text{C4}}$TUTR
& 8.05±0.99 & 97.66±2.34
& 204.23±88.41 & 98.89±0.97
& 2.88±0.92 & 98.68±1.10 \\

Eq$_{\text{C8}}$TUTR
& 7.40±1.20 & 98.08±1.92
& 199.65±47.48 & 98.81±0.86
& 3.55±0.28 & 98.88±1.01 \\

Eq$_{\text{SO(2)}}$TUTR
& 6.31±0.73 & 97.80±2.20
& 196.27±33.32 & 98.85±0.77
& 4.21±0.12 & 98.95±0.19 \\

\bottomrule
\end{tabular}
\end{adjustbox}

\vspace{1.2em}


\begin{adjustbox}{width=\textwidth}
\begin{tabular}{lcccccc}
\toprule
& \multicolumn{2}{c}{\textbf{UNIV}}
& \multicolumn{2}{c}{\textbf{ZARA$_1$}}
& \multicolumn{2}{c}{\textbf{ZARA$_2$}} \\
\cmidrule(lr){2-3}\cmidrule(lr){4-5}\cmidrule(lr){6-7}
\textbf{Method}
& $Q_{0.01}\downarrow$ & Cov$_{99\%}\uparrow$
& $Q_{0.01}\downarrow$ & Cov$_{99\%}\uparrow$
& $Q_{0.01}\downarrow$ & Cov$_{99\%}\uparrow$ \\
\midrule

SocialVAE
& 5.38±0.21 & 98.97±0.25
& 5.08±0.20 & 98.99±0.22
& 5.29±0.37 & 98.96±0.43 \\

Eq$_{\text{C4}}$SocialVAE
& 4.39±0.22 & 98.98±0.30
& 4.22±0.20 & 98.97±0.27
& 4.24±0.14 & 98.98±0.20 \\

Eq$_{\text{C8}}$SocialVAE
& 4.22±0.18 & 98.99±0.24
& 4.07±0.13 & 98.98±0.20
& 4.11±0.13 & 98.98±0.22 \\

Eq$_{\text{SO(2)}}$SocialVAE
& 4.17±0.17 & 98.98±0.28
& 3.93±0.21 & 98.97±0.28
& 3.98±0.13 & 98.99±0.24 \\

\midrule
TUTR
& 4.72±0.15 & 98.98±0.21
& 4.21±0.66 & 98.95±0.71
& 4.48±0.31 & 98.92±0.43 \\

Eq$_{\text{C4}}$TUTR
& 4.45±0.12 & 98.99±0.20
& 4.23±0.44 & 99.03±0.74
& 4.46±0.41 & 98.89±0.64 \\

Eq$_{\text{C8}}$TUTR
& 4.38±0.27 & 98.98±0.35
& 4.12±0.34 & 99.01±0.59
& 4.14±0.24 & 98.89±0.57 \\

Eq$_{\text{SO(2)}}$TUTR
& 4.21±0.12 & 98.95±0.19
& 4.04±0.25 & 99.02±0.53
& 3.92±0.15 & 98.94±0.29 \\

\bottomrule
\end{tabular}
\end{adjustbox}

\caption{
Calibration quantile ($Q_{\alpha=0.01}$) and empirical coverage (Cov$_{99\%}$) for 15-split conformal prediction ($K=20$ samples) under symmetry groups 
$\mathcal{G}\in\{\mathrm{SO(2), C4, C8}\}$.
Lower is better for $Q_{0.01}$; higher is better for coverage.
}
\label{table:quantile_001_two_row}
\end{table*}

\begin{figure*}[h!]
\centering
\begin{subfigure}[t]{0.7\linewidth}
  \centering
  \includegraphics[width=\linewidth]{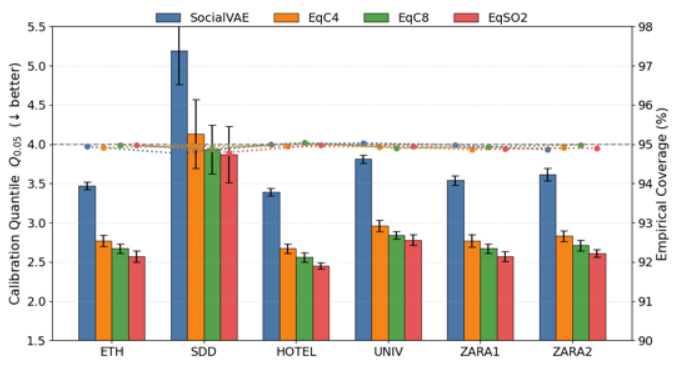}
\end{subfigure}\hfill
\caption{Calibration quantile ($Q_{\alpha=0.05}$) and empirical coverage (Cov$_{95\%}$) for 15-split conformal prediction under symmetry groups 
$\mathcal{G}\in\{\mathrm{SO(2), C4, C8}\}$ across ETH-UCY and SDD datasets. eCP methods consistently lead to smaller calibration quantiles, while preserving the prescribed coverage rate.}
\end{figure*}

These results confirm that explicitly injecting symmetry into post-hoc uncertainty quantification yields tangible improvements in efficiency without sacrificing validity. Importantly, eCP requires neither architectural equivariance nor retraining: approximate or emergent symmetry in pretrained models is sufficient to achieve meaningful uncertainty reduction.
Overall, the experiments demonstrate that equivariantized conformal prediction provides a principled and practical mechanism for sharpening uncertainty estimates in symmetry-rich, long-horizon forecasting tasks.

\section{Future work \& Limitations}
While eCP consistently improves the efficiency of conformal prediction sets, several limitations merit discussion. First, the method relies on access to a known or approximately valid symmetry group. If the assumed group poorly reflects the true invariances of the data, equivariantization may yield limited gains. Second, richer symmetry groups incur additional computational cost due to orbit averaging, particularly for continuous groups such as $\mathrm{SO}(2)$, which require Monte Carlo approximation. Although this cost is modest relative to model retraining, it may become non-negligible for some real-time systems with strict latency constraints. Finally, our experiments focus on prediction-set tightness and coverage rather than closed-loop planning performance. While tighter uncertainty sets are strongly correlated with improved safety and efficiency in downstream planners, formally quantifying this effect in an integrated planning pipeline remains an important direction for future work.




\section*{APPENDIX I - Chernoff bound improvement via MGF ordering}
\label{section:chernov}
\begin{lemma}[MGF ordering]
\label{lem:mgf}
Consider the cumulant generating functions $\psi_f(\lambda) = \log \mathbb{E}[e^{\lambda S_f}]$ and $\psi_{f^G}(\lambda) = \log \mathbb{E}[e^{\lambda S_{f^G}}]$. For $G$-invariant distribution $P$ and, if Assumptions \ref{as:G-invariant_score},\ref{as:convex_score} stand, then:
\[
\psi_{f^G}(\lambda) \le \psi_f(\lambda) \quad \forall\lambda \ge 0.
\]
\end{lemma}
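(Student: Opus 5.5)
The plan is to deduce the MGF ordering directly from the increasing convex order of Theorem~\ref{thm:icx}. For every fixed $\lambda\ge 0$ the map $\nu_\lambda(t)=e^{\lambda t}$ is increasing (weakly so for $\lambda=0$) and convex on $\mathbb{R}$. Theorem~\ref{thm:icx} gives $S_{f^G}\preceq_{\mathrm{icx}} S_f$ under $G$-invariance of $P$ together with Assumptions~\ref{as:G-invariant_score} and~\ref{as:convex_score}. Applying the defining inequality of $\preceq_{\mathrm{icx}}$ with $\nu=\nu_\lambda$ then yields
\[
\mathbb{E}\big[e^{\lambda S_{f^G}}\big]\le \mathbb{E}\big[e^{\lambda S_f}\big].
\]
Since $\log$ is increasing on $(0,\infty)$, taking logarithms gives $\psi_{f^G}(\lambda)\le\psi_f(\lambda)$.

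There are two technical points to handle.

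\emph{Integrability.} The definition of $\preceq_{\mathrm{icx}}$ applies to test functions for which the expectations exist. Because $s\ge 0$, both MGFs are well defined in $[1,\infty]$ for $\lambda\ge 0$. If $\mathbb{E}[e^{\lambda S_f}]=\infty$, the inequality holds trivially with the convention $\log\infty=\infty$.

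Otherwise, I would truncate. For $M>0$, the function $\nu_{\lambda,M}(t)=\min\{e^{\lambda t},e^{\lambda M}\}$ is not convex, so instead I would approximate $e^{\lambda t}$ from below by the increasing convex functions $\nu_{\lambda,M}(t)=e^{\lambda t}$ for $t\le M$, extended affinely with slope $\lambda e^{\lambda M}$ for $t>M$. These are bounded by an affine function, so the integrable case of the icx definition applies, since $\mathbb{E}[S_f]<\infty$ is implied by the finite MGF. Monotone convergence as $M\to\infty$ then transfers the inequality to $\nu_\lambda$.

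\emph{Direct argument, avoiding reliance on Theorem~\ref{thm:icx}.} As a check I would also give the argument behind it. By \eqref{eq:jENSEN_INEQUALITY}, $S_{f^G}(x,y)\le \Pi_G[s;f](x,y)$ pointwise. By Lemma~\ref{lem:expectation_orbit_conditional}, $\Pi_G[s;f](X,Y)=\mathbb{E}[S_f\mid \mathcal{O}_{(X,Y)}]$. Hence, using monotonicity of $e^{\lambda\cdot}$ first and conditional Jensen second,
\[
e^{\lambda S_{f^G}}\le e^{\lambda\,\mathbb{E}[S_f\mid\mathcal{O}]}\le \mathbb{E}\big[e^{\lambda S_f}\mid\mathcal{O}\big].
\]
Taking total expectations, via the tower property as in the Corollary following Lemma~\ref{lem:expectation_orbit_conditional}, finishes the argument.

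The main obstacle is the identification in Lemma~\ref{lem:expectation_orbit_conditional}: the Haar average over the orbit must equal the conditional expectation given the orbit $\sigma$-algebra. This is exactly where $G$-invariance of $P$ enters, and I would take it as given from that lemma. The restriction $\lambda\ge0$ is essential, because for $\lambda<0$ the map $e^{\lambda t}$ is decreasing and the first inequality in the display reverses.
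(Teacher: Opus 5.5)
Your argument is correct, but it is organized differently from the paper's. The paper invokes neither Theorem~\ref{thm:icx} nor the orbit-conditional identification of Lemma~\ref{lem:expectation_orbit_conditional}. It applies Jensen once, pointwise in $(x,y)$ and with respect to the Haar probability $\mu_G$, to the composite map $a\mapsto e^{\lambda s(a,y)}$. This map is convex for $\lambda\ge 0$, being a nondecreasing convex function of a convex one, so it gives $e^{\lambda S_{f^G}}\le \mathbb{E}_G\big[e^{\lambda s(\psi_g f_\theta(\phi_{g^{-1}}X),Y)}\big]$. Assumption~\ref{as:G-invariant_score} then moves $\psi_g$ onto the label. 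Finally, Fubini together with the \emph{unconditional} invariance $\mathbb{E}_P[h(\phi_{g^{-1}}X,\psi_{g^{-1}}Y)]=\mathbb{E}_P[h(X,Y)]$, for each fixed $g$, collapses the group average to $\mathbb{E}[e^{\lambda S_f}]$.

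Your primary route has a conceptual advantage: the MGF ordering becomes just the instance $\nu=e^{\lambda t}$ of the icx ordering. The cost is that it inherits everything Theorem~\ref{thm:icx} relies on. Your backup argument is essentially the proof of Theorem~\ref{thm:icx} specialized to the exponential.

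The step you single out as the main obstacle, $\Pi_G[s;f]=\mathbb{E}[S_f\mid\sigma(G)]$, can in fact be avoided. The inequality $e^{\lambda \Pi_G[s;f](x,y)}\le\int_G e^{\lambda s(f(\phi_{g^{-1}}x),\psi_{g^{-1}}y)}\,d\mu_G(g)$ is ordinary Jensen over $\mu_G$, applied pointwise. Integrating it against $P$ uses only Tonelli and invariance of $P$ under each $\tau_g$, with no conditional expectations or orbit $\sigma$-algebras. This matters because the identification in Lemma~\ref{lem:expectation_orbit_conditional} is true, but it is the more delicate measure-theoretic statement, and the paper's written proof of it is loose. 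Given the invariant $\sigma$-algebra, a $G$-invariant law is Haar-distributed along orbits, which is why the identification holds.

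Likewise, your truncation step is unnecessary on the direct route. All integrands are nonnegative, indeed $\ge 1$, so Tonelli covers the case $\mathbb{E}[e^{\lambda S_f}]=\infty$ automatically.
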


\begin{theorem}[non-asymptotic Chernoff bound improvement]
\label{thm:chernoff}
For any threshold $t>0$,
\begin{align*}
    &\mathbb{B}_{\mathrm{Chernov}}(S_{f^G} \ge t)
\;:=\;
\inf_{\lambda \ge 0} e^{-\lambda t + \psi_{f^G}(\lambda)}\\
&\qquad\qquad\;\le\;
\inf_{\lambda \ge 0} e^{-\lambda t + \psi_f(\lambda)}
\;:=\;
\mathbb{B}_{\mathrm{Chernov}}(S_f \ge t)
\end{align*}
\end{theorem}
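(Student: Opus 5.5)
The plan is to reduce the theorem to a pointwise comparison of the two objective functions inside the infima and then pass to the infimum. By Lemma~\ref{lem:mgf} (MGF ordering), for every $\lambda\ge 0$ we have $\psi_{f^G}(\lambda)\le\psi_f(\lambda)$. This lemma itself follows from Theorem~\ref{thm:icx}: the map $u\mapsto e^{\lambda u}$ is increasing and convex for $\lambda\ge0$, so $S_{f^G}\preceq_{\mathrm{icx}} S_f$ gives $\mathbb{E}[e^{\lambda S_{f^G}}]\le \mathbb{E}[e^{\lambda S_f}]$. Taking logarithms, which are monotone, yields the cumulant ordering.

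Fix $t>0$ and define $h_f(\lambda):=-\lambda t+\psi_f(\lambda)$ and $h_{f^G}(\lambda):=-\lambda t+\psi_{f^G}(\lambda)$ on $[0,\infty)$. The linear term $-\lambda t$ is common to both, so $h_{f^G}(\lambda)\le h_f(\lambda)$ for every $\lambda\ge0$. Since $\exp$ is increasing, $e^{h_{f^G}(\lambda)}\le e^{h_f(\lambda)}$ pointwise.

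Next we pass to the infimum. For any $\lambda\ge 0$,
\[
\inf_{\lambda'\ge0}e^{h_{f^G}(\lambda')}\le e^{h_{f^G}(\lambda)}\le e^{h_f(\lambda)}.
\]
Taking the infimum over $\lambda$ on the right gives the claimed inequality $\mathbb{B}_{\mathrm{Chernov}}(S_{f^G}\ge t)\le \mathbb{B}_{\mathrm{Chernov}}(S_f\ge t)$. No attainment of the infimum is needed.

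The only delicate point is finiteness of the moment generating functions. If $\mathbb{E}[e^{\lambda S_f}]=\infty$ for some $\lambda$, then $\psi_f(\lambda)=+\infty$ and the pointwise inequality holds trivially in the extended reals. The infimum is then effectively taken over $\{\lambda:\psi_f(\lambda)<\infty\}$, which always contains $\lambda=0$, where both objectives equal $1$. So the argument goes through with values in $[0,\infty]$.

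The one real concern is that Theorem~\ref{thm:icx} is stated for integrable test functions. To apply it to $e^{\lambda u}$ when the expectation may be infinite, I would approximate by the increasing convex truncations $\nu_k(u)=\min\{e^{\lambda u},\,e^{\lambda k}+\lambda e^{\lambda k}(u-k)_+\}$, apply the ordering to each $\nu_k$, and pass to the limit by monotone convergence. I expect this extension of Lemma~\ref{lem:mgf} to be the main technical step. Everything after it is a monotonicity argument.
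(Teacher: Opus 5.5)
Your proof is correct and follows the paper's reasoning. The paper gives no separate argument for this theorem beyond Lemma~\ref{lem:mgf}; the pointwise ordering of the exponents, and then of the infima, follows immediately from it, exactly as in your argument. The only difference is that the paper proves Lemma~\ref{lem:mgf} directly, applying Jensen's inequality to $u\mapsto e^{\lambda s(u,y)}$ along the orbit and then using $G$-invariance of $P$, instead of deriving it from Theorem~\ref{thm:icx}; since the integrands are nonnegative, this makes your (valid) truncation step unnecessary.
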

Thus, every exponential tail bound (Chernoff, Bernstein, Bennett) for $S_f$ is uniformly tightened under symmetrization of the predictor. 

\noindent\textit{Rate-function dominance and asymptotic consequences}:
Assume the MGF $\mathbb{E}[e^{\lambda S_f}]$ is finite in a neighborhood of $0$. Define the Legendre transform (the \emph{rate function} from theory of Large Deviations) $I_{S_f}(t) = \sup_{\lambda \ge 0} \{ \lambda t - \psi_f(\lambda)\}$ and $I_{S_{f^G}}(t) = \sup_{\lambda \ge 0} \{ \lambda t - \psi_{f^G}(\lambda)\}$. Another consequence of the MGF ordering is that the Cram\'er rate function of the equivariantized model dominates the pre-trained non-symmetric one.

\begin{cor}[Rate-function dominance]
\label{lem:ratefunc}
If Lemma \ref{lem:mgf} stands, i.e. $\psi_{f^G}(\lambda)\le\psi_f(\lambda),\forall\lambda$, then $I_{S_{f^G}}(t)\ge I_{S_f}(t),\forall t$.
\end{cor}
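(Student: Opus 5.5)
The claim is a pointwise statement about Legendre transforms: if $\psi_{f^G}(\lambda)\le\psi_f(\lambda)$ for every $\lambda\ge 0$, then $I_{S_{f^G}}(t)\ge I_{S_f}(t)$ for every $t$. The plan is a direct, order-reversing argument on suprema. Fix $t\in\mathbb{R}$ and any $\lambda\ge 0$. Lemma~\ref{lem:mgf} gives $\psi_{f^G}(\lambda)\le\psi_f(\lambda)$, so
\[
\lambda t-\psi_{f^G}(\lambda)\;\ge\;\lambda t-\psi_f(\lambda).
\]
The right-hand side is therefore bounded above by $\sup_{\lambda'\ge0}\{\lambda' t-\psi_{f^G}(\lambda')\}=I_{S_{f^G}}(t)$. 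Taking the supremum over $\lambda\ge0$ on the left of this bound gives $I_{S_f}(t)\le I_{S_{f^G}}(t)$.

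Before using this argument, the extended-real-valued cases need to be checked so that no step involves $\infty-\infty$. Both cumulant generating functions take values in $(-\infty,+\infty]$. They are never $-\infty$, because $S_f,S_{f^G}\ge 0$ under the nonnegative score; hence $\mathbb{E}[e^{\lambda S}]\ge 1$ and $\psi\ge 0$ for $\lambda\ge0$. If $\psi_f(\lambda)=+\infty$ for some $\lambda$, that term contributes $-\infty$ to the supremum defining $I_{S_f}$. The inequality $\lambda t-\psi_{f^G}(\lambda)\ge-\infty$ then holds trivially, so the termwise comparison remains valid in $[-\infty,\infty)$. The assumption that the moment generating function of $S_f$ is finite near $0$ is needed only so that the rate functions are well defined and nontrivial. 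By Lemma~\ref{lem:mgf}, finiteness near $0$ transfers automatically to $S_{f^G}$. At $\lambda=0$ both expressions equal $0$, so both rate functions are $\ge 0$ and both suprema are over nonempty sets with a finite value.

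I expect essentially no obstacle here: the only step needing care is the extended-value bookkeeping above. Optionally, I would add a remark linking the result to Theorem~\ref{thm:chernoff}: $\mathbb{B}_{\mathrm{Chernov}}(S\ge t)=e^{-I_S(t)}$, so the corollary is equivalent to that theorem. Through Cram\'er's theorem, it also gives a faster asymptotic decay rate for empirical means of the symmetrized scores.
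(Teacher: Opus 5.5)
Your termwise comparison of suprema is correct. It is exactly the reasoning the paper relies on: the paper gives no separate proof and treats the corollary as immediate from Lemma~\ref{lem:mgf}, because $I_S(t)=\sup_{\lambda\ge0}\{\lambda t-\psi(\lambda)\}$ reverses the pointwise order of $\psi$. Your extended-value bookkeeping and the identity $\mathbb{B}_{\mathrm{Chernov}}(S\ge t)=e^{-I_S(t)}$, which links the corollary to Theorem~\ref{thm:chernoff}, are sound details the paper leaves unstated.
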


The ordering $\psi_{f^G}\!\le\!\psi_f$ means equivariantization suppresses large-deviation mass by averaging over symmetry-related orbits, yielding a steeper rate function and therefore faster tail decay. 
Geometrically, symmetry alignment redistributes uncertainty within each orbit, producing sharper prediction regions while preserving finite-sample coverage.

\section*{APPENDIX II - Tightening Hoeffding's Bound}
\label{section:hoeffding}

The preceding sections established that equivariantization reduces both the variance and the moment generating function of the nonconformity scores.
We now show that these properties also lead to an improvement in Hoeffding-type concentration, which controls the deviation of empirical averages from their expectations and underpins many finite-sample coverage guarantees in conformal prediction.

Consider $f_\theta$ model trained on $D_{\text{train}}$ and $D_{\text{cal}} = {(X_i, Y_i)}_{i=1:n_c}$ calibration set, both drawn i.i.d. from a G-invariant distribution. For $G$-invariant and convex $s:\mathcal{Y}\times\mathcal{Y}\rightarrow \mathbb{R}_{\geq0}$, consider $S_{f,i}=s(f(X_i),Y_i)$ and $S_{f^G,i}=s(f^G(X_i),Y_i)$ non-conformity scores on $D_{\text{cal}}$ for the pre-trained model and its equivariantized form. Assume $S_f\in [0,b]\subset \mathbb{R}_{\geq0}$ almost surely and define the empirical statistics $$\bar S_{n_c}=\dfrac{1}{n_c}\sum_{i=1}^{n_c}S_{f,i} \quad,\quad\bar S^G_{n_c}=\dfrac{1}{n_c}\sum_{i=1}^{n_c}S_{f^G,i} \quad $$
 Hoeffding's inequality yields
\begin{align*}
P\!\left(
  \left| \bar S_{n_c}- \mathbb{E}[S_f]
  \right| > \varepsilon
\right)
\le
2\exp\!\left(
  -\tfrac{2n_c\varepsilon^2}{b^2}
\right)=:\mathbb{B}_{\text{Hoeffding}}(\bar S_{n_c})
\label{eq:hoeffding-equ}
\end{align*}

\begin{lemma}\label{lemma:hoeffding}
 Equivariantized predictors \ref{eq:equivariantized_predictor} yield tighter Hoeffding-type bounds, i.e. $ \mathbb{B}_{\text{Hoeffding}}(\bar S_{n_c}) \geq \mathbb{B}_{\text{Hoeffding}}(\bar S^G_{n_c})$
\end{lemma}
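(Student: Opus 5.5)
The plan is to show that the equivariantized scores live in an interval no larger than $[0,b]$. Hoeffding's bound depends on the score distribution only through the width of its almost-sure range, so $\mathbb{B}_{\text{Hoeffding}}$ for $\bar S^G_{n_c}$ can then be written with a range $b_G\le b$. Since $\varepsilon\mapsto 2\exp(-2n_c\varepsilon^2/b^2)$ is increasing in $b$, this immediately gives $\mathbb{B}_{\text{Hoeffding}}(\bar S_{n_c})\ge \mathbb{B}_{\text{Hoeffding}}(\bar S^G_{n_c})$. The centring also changes from $\mathbb{E}[S_f]$ to $\mathbb{E}[S_{f^G}]$; this is harmless because Hoeffding's inequality concerns deviations from the respective mean.

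The key step is the pointwise bound $0\le S_{f^G}\le b$ almost surely. Nonnegativity is by definition of $s$. For the upper bound, I would use the Jensen step \eqref{eq:jENSEN_INEQUALITY} together with the $G$-invariance in Assumption~\ref{as:G-invariant_score}:
\[
s\big(f^G(x),y\big)\le \int_G s\big(f(\phi_{g^{-1}}(x)),\psi_{g^{-1}}(y)\big)\,d\mu_G(g)=\Pi_G[s;f](x,y).
\]
The integrand is $S_f$ evaluated at the transformed sample $\tau_{g^{-1}}(x,y)$. By $G$-invariance of $P$, $\tau_{g^{-1}}(X,Y)\overset{d}{=}(X,Y)$, so the integrand lies in $[0,b]$ for $P$-a.e. $(x,y)$ and $\mu_G$-a.e. $g$. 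Averaging against the probability measure $\mu_G$ then keeps the integral in $[0,b]$, and hence $S_{f^G}\in[0,b]$ a.s.

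The i.i.d. structure of the $S_{f^G,i}$ is inherited from that of $(X_i,Y_i)$, because $f^G$ is a fixed deterministic function of the pretrained $f$. Hoeffding therefore applies verbatim with range $b$, or with any smaller a.s. range $b_G$. Optionally, I could sharpen the statement by noting, via Lemma~\ref{lem:mgf}, that $S_{f^G}$ has a dominated cumulant generating function. I would not rely on that, because Hoeffding's lemma itself uses only the range.

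The main obstacle is the measure-theoretic step of passing from "$S_f\le b$ a.s." to "the integrand is $\le b$ for $\mu_G\otimes P$-a.e. $(g,x,y)$." I would handle it with Fubini on the product space, using measurability of the action and $G$-invariance of $P$: the set where the integrand exceeds $b$ has $P$-measure zero for every $g$, so it is null for the product measure. The other delicate point is that, as stated, the lemma only yields a non-strict inequality, with equality when $b$ is the only available range. I would state it as such, and remark that strict improvement requires $\operatorname{ess\,sup} S_{f^G}<b$, e.g. when Jensen is strict on the orbits achieving the supremum.
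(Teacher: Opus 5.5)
Your proposal is correct and follows the paper's proof. Jensen's inequality along the orbit bounds $S_{f^G}$ by the orbit average of $S_f$, and $G$-invariance of $P$ (the paper's ``measure-preserving'' actions) keeps each orbit term in $[0,b]$. Hence $b_G\le b$, and the Hoeffding bound $2\exp(-2n_c\varepsilon^2/b^2)$ is monotone in the range. Your Fubini step makes rigorous the paper's claim that the bound holds ``for all $g\in G$'', which strictly speaking only holds almost everywhere.
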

Equivariantization therefore provides a potentially tighter exponential tail bound whenever the score varies nontrivially across group orbits, as averaging (a projection), so it cannot expand the score range; it preserves or contracts it. A sharper bound follows from Bernstein’s inequality, which explicitly depends on the variance:
\begin{equation}
P\!\left(
  \bar S^G_{n_c} - \mathbb{E}[S_{f^G}] > \varepsilon
\right)
\le
\exp\!\left(
  -\frac{
    n_c\varepsilon^2
  }{
    2\,\mathrm{Var}(S_{f^G})
    +\tfrac{2}{3}b_G\varepsilon
  }
\right)
\label{eq:bernstein}
\end{equation}
Since $S_{f^G} \preceq_{\mathrm{icx}}  S_f $ implies $\mathrm{Var}(S_{f^G}) \le \mathrm{Var}(S)$ and $b_G \le b$, then $\mathbb{B}_{\text{Bernstein}}(S_{f^G}) \leq \mathbb{B}_{\text{Bernstein}}(S_{f})$. This guarantees at least as strong concentration of calibration statistics around their expectations, which in turn yields tighter empirical quantile estimates and smaller conformal prediction sets. Furthermore, from Equation \ref{eq:equivariantized_mean_score_decrease} $\mathbb{E}[S_{f^G}]\leq \mathbb{E}[S_f]$, meaning that the empirical equivariantized non-conformity scores concentrate faster around a smaller statistical mean. Variance reduction leads to tightening of Chebysev-Cantelli's concentration inequality
\begin{equation}
P\!\left(
  S_{f^G} - \mathbb{E}[S_{f^G}] > \varepsilon
\right)
\le \dfrac{\mathrm{Var}(S_{f^G})}{\mathrm{Var}(S_{f^G}) + \varepsilon^2}
\label{eq:bernstein}
\end{equation}

\section*{APPENDIX III - Proofs}
\begin{proof}[Proof of Lemma \ref{lem:PiG_invariance_projection}]
By Definition~\ref{def:symmetrization_operator_invariance} and left-invariance of the Haar probability measure $\mu_G$,
\begin{align*}
&\Pi_G[s;f]\big(\phi_h(x),\psi_h(y)\big)\\
&=\int_G s\!\Big( f(\phi_{g^{-1}}\phi_h(x)) ,\,\psi_{g^{-1}}\psi_h(y)\Big)\,d\mu_G(g)\\
&\stackrel{g'=h^{-1}g}{=}\int_{h\cdot G} s\!\Big(f(\phi_{g'^{-1}}(x)),\,\psi_{g'^{-1}}(y)\Big)\,d\mu_G(hg') \\
&=\int_G s\!\Big(f(\phi_{g'^{-1}}(x)),\,\psi_{g'^{-1}}(y)\Big)\,d\mu_G(g')=\Pi_G[s;f](x,y),
\end{align*}
where we used $d\mu_G(g)=d\mu_G(g')$ and the group homomorphism property of the actions.
If $f\in F_G$, then $f\circ\phi_{g^{-1}}=\psi_{g^{-1}}\circ f$, hence by $G$-invariance of $s$ in Assumption~\ref{as:G-invariant_score}, $s\big(f(\phi_{g^{-1}}(x)),\psi_{g^{-1}}(y)\big)=s\big(\psi_{g^{-1}}f(x),\psi_{g^{-1}}(y)\big)=s\big(f(x),y\big)$, so
\[
\Pi_G[s;f](x,y)=\int_G s\big(f(x),y\big)\,d\mu_G(g)=s\big(f(x),y\big).
\]
\end{proof}

\begin{proof}[Proof of Lemma \ref{lem:expectation_orbit_conditional}]
    Let $(X,Y)\in\mathcal{O}^{(\phi,\psi)}_{(x,y)}$. For some element $g\in G$ it stands that
    \begin{align*}
        &\mathbb{E}_{(\mathcal{X}\times \mathcal{Y})\sim P}[s(f(X),Y) \,|\, (X,Y)\in\mathcal{O}^{(\phi,\psi)}_{(x,y)} ] \\&=\int_{\mathcal{O}^{(\phi,\psi)}_{(x,y)}} s\big(f(X),Y \big) d\mu(X,Y) \\
        &=
        \int_{\mathcal{X,Y}} s\Big(f\big( \phi_{g^{-1}}(X)\big),\psi_{g^{-1}}(Y)  \Big) \cdot\\
        &\qquad\qquad\qquad \cdot\mathds{1}\{(\phi_{g^{-1}}(X),\psi_{g^{-1}}(Y) )\in \mathcal{O}^{(\phi,\psi)}_{(x,y)}\} d\mu(X,Y) \\
        &=  \int_{\mathcal{X,Y}} s\Big(f\big( \phi_{g^{-1}}(X)\big),\psi_{g^{-1}}(Y)  \Big) \cdot\\
        &\qquad\qquad\qquad \cdot\mathds{1}\{(X,Y )\in \mathcal{O}^{(\phi,\psi)}_{(x,y)}\} d\mu(X,Y)
    \end{align*}
    By construction of the orbit, $\exists g^*\in G\;\text{s.t.}\; x=\phi_{g^*}X$. Taking expectations over the group yields:
    \begin{align*}
        &\int_{\mathcal{O}^{(\phi,\psi)}_{(x,y)}} s\big(f(X),Y \big) d\mu(X,Y)=  \\
        &= \int_G \int_{\mathcal{X,Y}} s\Big(f\big( \phi_{g^{-1}}(X)\big),\psi_{g^{-1}}(Y)  \Big) \cdot\\
        &\qquad\qquad\qquad \cdot\mathds{1}\{(X,Y )\in \mathcal{O}^{(\phi,\psi)}_{(x,y)}\} d\mu(X,Y) d\mu_G(g)=\\
        &= \int_G \int_{\mathcal{X,Y}} s\Big(f\big( \phi_{g^{-1}g^{*-1}}(x)\big),\psi_{g^{-1}g^{*-1}}(y)  \Big)\cdot\\
        &\qquad\qquad\qquad \cdot \mathds{1}\{(X,Y )\in \mathcal{O}^{(\phi,\psi)}_{(x,y)}\} d\mu(X,Y) d\mu_G(g)=\\
        &\stackrel{Fubini}{=} \int_{\mathcal{X,Y}} \int_G s\Big(f\big( \phi_{g^{-1}g^{*-1}}(x)\big),\psi_{g^{-1}g^{*-1}}(y)  \Big) d\mu_G(g) \cdot\\
        &\qquad\qquad\qquad \cdot\mathds{1}\{(X,Y )\in  \mathcal{O}^{(\phi,\psi)}_{(x,y)}\} d\mu(X,Y) =\\
        &\stackrel{\text{Lemma \ref{lem:PiG_invariance_projection}}}{=} \int_{\mathcal{X,Y}} \int_G s\Big(f\big( \phi_{g^{-1}}(x)\big),\psi_{g^{-1}}(y)  \Big) d\mu_G(g) \cdot\\
        &\qquad\qquad\qquad \cdot\mathds{1}\{(X,Y )\in  \mathcal{O}^{(\phi,\psi)}_{(x,y)}\} d\mu(X,Y)=\\
        &= \int_{\mathcal{O}^{(\phi,\psi)}_{(x,y)}} \Pi_G[s;f](x,y)  d\mu(X,Y) = \Pi_G[s;f](x,y) 
    \end{align*}
\end{proof}

\begin{proof}[Proof of Theorem \ref{thm:split-g-equivariant-validity}]
By Assumption~\ref{as:G-invariant_score}, $\Pi[s;f]$ is the same measurable map applied to every index and is $G$-invariant:
\[
\Pi[s;f](g\!\cdot\!x,\,g\!\cdot\!y)=\Pi[s;f](x,y)\qquad\forall\,g\in G.
\]
Because the calibration points and test point are drawn from a $G$-invariant law and are permutation-exchangeable across indices (as stated in the theorem), the $(m{+}1)$-tuple
\[
\big(\,(X_1,Y_1),\ldots,(X_m,Y_m),\,(X_{m+1},Y_{m+1})\,\big)
\]
is exchangeable under index permutations, conditional on $\mathcal D_{\text{train}}$.
Applying the same coordinate-wise function $(x,y)\mapsto \Pi[s;f](x,y)$ preserves exchangeability.
Therefore the \emph{score vector}
\begin{align*}
    &\big(\,\widetilde S_1,\ldots,\widetilde S_m,\,\widetilde S_{m+1}(Y_{m+1})\,\big)
\\ \text{with}\quad
\widetilde S_i:=&\Pi[s;f](X_i,Y_i),\;\;\widetilde S_{m+1}(y):=\Pi[s;f](X_{m+1},y)
\end{align*}
is exchangeable conditional on $\mathcal D_{\text{train}}$.

Let $\widetilde S_{(1)}\le\cdots\le \widetilde S_{(m)}$ denote the ordered calibration scores and set
\(
k=\lceil (m+1)(1-\alpha)\rceil,\; q=\widetilde S_{(k)}.
\)
Define the rank
\[
R \;:=\; \big|\{i\in\{1,\ldots,m{+}1\}:\, \widetilde S_i \ge \widetilde S_{m+1}(Y_{m+1})\}\big|.
\]
By exchangeability of the score vector, conditional on $\mathcal D_{\text{train}}$ the rank $R$ is uniform on $\{1,\ldots,m{+}1\}$ after standard randomized tie-breaking; without randomization, it is stochastically no smaller than the uniform law, yielding a conservative inequality. By construction, $\widetilde S_{m+1}(Y_{m+1})\le q$ if and only if $R\le k$.
Hence,
\(
P\!\big(\widetilde S_{m+1}(Y_{m+1})\le q \,\big|\, \mathcal D_{\text{train}}\big)
=\Pr(R\le k \mid \mathcal D_{\text{train}})
\ge \frac{k}{m+1}
\ge 1-\alpha.
\)
But $\{\widetilde S_{m+1}(Y_{m+1})\le q\}$ is exactly the event $\{Y_{m+1}\in C_{1-\alpha}^{(G)}(X_{m+1})\}$, which proves
\begin{align*}
    P\!\big(Y_{m+1}\in C_{1-\alpha}^{(G)}(X_{m+1}) \,\big|\, \mathcal D_{\text{train}}\big)\;\ge\; 1-\alpha
\end{align*}
If ties occur with probability zero (e.g., when $\Pi[s;f](X,Y)$ has a continuous distribution), the non-randomized rule attains coverage $k/(m{+}1)$, which differs from $1-\alpha$ by at most $1/(m{+}1)$; with randomized tie-breaking it equals $1-\alpha$ exactly.
Moreover, $G$-invariance of $\Pi[s;f]$ implies the natural equivariance of the predictor:
\(
\widetilde S_{m+1}(y)\le q \iff \widetilde S_{m+1}(\psi_g(y))\le q\,,\,\forall g\in G,
\)
so $C_{1-\alpha}^{(G)}(\phi_g(x))=\psi_g \circ C_{1-\alpha}^{(G)}(x)\,,\,\forall g\in G$.
\end{proof}

\begin{proof}[Proof of Corollary \ref{cor:variance_decomp_symm }]
    Let $P$ be a $G$-invariant probability measure on $\mathcal{X}\times\mathcal{Y}$ and let $Z:=s(f(X),Y)\in L^2(P)$. Lemma \ref{lem:expectation_orbit_conditional} identified $\Pi_G$ with a conditional expectation onto $G$-orbits, i.e.:
    \[
    \Pi_G[s;f](X,Y)\;=\;\mathbb{E}\!\big[Z\,\big|\,\sigma(G)\big],
    \]
    where $\mu_G$ is the Haar probability measure on the compact group $G$ and $\sigma(G)$ is the $\sigma$-algebra of $G$-invariant events. The law of total variance yields:
    \begin{align*}
    \mathrm{Var}(Z)
    &=\mathrm{Var}\!\big(\mathbb{E}[Z\mid \sigma(G)]\big)
    +\mathbb{E}\!\big[\mathrm{Var}(Z\mid \sigma(G))\big]\\
    &=\mathrm{Var}\!\big(\Pi_G[s;f]\big)+\mathbb{E}\!\big[\mathrm{Var}(Z\mid \sigma(G))\big]
    \end{align*}
    and hence $\mathrm{Var}(Z)-\mathrm{Var}\!\big(\Pi_G[s;f]\big)\;=\;\mathbb{E}\!\big[\mathrm{Var}(Z\mid\sigma(G))\big]\;\ge 0$, with equality iff $Z\sim \hat{P}$ with $\hat{P}$ $G$-invariant almost surely.
\end{proof}

\begin{proof}[Proof of Theorem \ref{thm:icx}]
For $(X,Y)\sim P$, from pointwise Jensen's inequality along $G$-orbits \ref{eq:jENSEN_INEQUALITY}, we have 
\begin{equation}\label{eq:pointwise-dom}
s\big(f_\theta^G(X),Y\big)\;\le\;\Pi_G[s;f_\theta](X,Y)\qquad\text{a.s.}
\end{equation}
By Lemma~\ref{lem:expectation_orbit_conditional}, one may identify $\Pi_G[s;f_\theta]$ as a conditional expectation:
\[
\Pi_G[s;f_\theta](X,Y)
\;=\;\mathbb{E}\!\big[s(f_\theta(X),Y)\,\big|\,\sigma(G)\big]
\;=\;\mathbb{E}\!\big[S_f\,\big|\,\sigma(G)\big],
\]
where $\sigma(G)$ is the $\sigma$-algebra of $G$-invariant events (equivalently, the $\sigma$-algebra generated by the orbits). From \eqref{eq:pointwise-dom} and the monotonicity of $\varphi$:
\[
\varphi\!\big(S_{f^G}\big)\;\le\;\varphi\!\Big(\mathbb{E}\!\big[S_f\mid\sigma(G)\big]\Big)\qquad\text{a.s.}
\]
By conditional Jensen (convexity of $\varphi$):
\[
\varphi\!\Big(\mathbb{E}\!\big[S_f\mid\sigma(G)\big]\Big)\;\le\;\mathbb{E}\!\big[\varphi(S_f)\mid\sigma(G)\big]\qquad\text{a.s.}
\]
Taking expectations yields
\[
\mathbb{E}\big[\varphi(S_{f^G})\big]\;\le\;\mathbb{E}\big[\varphi(S_f)\big].
\]
Since this holds for every increasing convex $\varphi$ with finite expectation, we conclude $S_{f^G}\preceq_{\mathrm{icx}} S_f$.
\end{proof}

\begin{proof}[Proof of Theorem \ref{theorem:cvar_ordering}]
By Theorem~\ref{thm:icx}, we have
$S_{f^G}\preceq_{\mathrm{icx}} S_f$. 
By the stop-loss characterization of $\preceq_{\mathrm{icx}}$,
\begin{equation}\label{eq:stoploss}
\mathbb{E}\big[(S_{f^G}-t)_+\big]\;\le\;\mathbb{E}\big[(S_f-t)_+\big]\qquad\forall\,t\in\mathbb{R}.
\end{equation}
Recall the Rockafellar--Uryasev representation of CVaR for any integrable random variable $Z$ and $\alpha\in[0,1)$:
\[
\mathrm{CVaR}_\alpha(Z)
=\inf_{t\in\mathbb{R}}\Big\{\, t \;+\; \frac{1}{1-\alpha}\,\mathbb{E}[(Z-t)_+] \,\Big\}.
\]
Applying this to $Z=S_{f^G}$ and $Z=S_f$, and using \eqref{eq:stoploss}, we obtain $\forall t\in\mathbb{R}$,
\[
t+\frac{1}{1-\alpha}\mathbb{E}[(S_{f^G}-t)_+]
\;\le\; t+\frac{1}{1-\alpha}\mathbb{E}[(S_f-t)_+].
\]
Taking the infimum over $t$ on both sides yields
\[
\mathrm{CVaR}_\alpha(S_{f^G})
\;\le\; \mathrm{CVaR}_\alpha(S_f).
\]
Furthermore, from the definition of $\textrm{CVaR}_\alpha$, it holds that:
\begin{align*}
    &\mathrm{CVaR}_\alpha(S_f)- \mathrm{CVaR}_\alpha(S_{f^G}) = \frac{1}{1-\alpha} \int_\alpha^1 F_{S_{f}}^{-1}(u) - F_{S_{f^G}}^{-1}(u)\,du \\
    &=\frac{1}{1-\alpha} \Bigg[\int_0^1 F_{S_{f}}^{-1}(u) - F_{S_{f^G}}^{-1}(u)du -\underbrace{\int_0^\alpha F_{S_{f}}^{-1}(u) - F_{S_{f^G}}^{-1}(u)du}_{\leq0 \text{  from Lemma \ref{lemma:2.2}}} \Bigg] \\
    &\leq \frac{1}{1-\alpha} \Big[\int_0^1 F_{S_{f}}^{-1}(u)\,du - \int_0^1 F_{S_{f^G}}^{-1}(u)\,du \Big] = \dfrac{\mathbb{E}[S_f - S_{f^G}]}{1-\alpha}
\end{align*}
\end{proof}

\begin{proof}[Proof of Lemma \ref{lemma:2.2}]
Recall that for an integrable random variable $X$, the \emph{stop-loss transform} is defined by
\[
\mathrm{SL}_X(t) := \mathbb{E}(X - t)_+ = \int_t^{\infty} \big(1 - F(x)\big)\,dx, 
\qquad t \in \mathbb{R}.
\]
For each $p \in (0,1)$, it holds that
    \begin{equation}
    \int_p^1 F^{-1}(u)\,du 
    = \inf_{t \in \mathbb{R}} 
      \big\{ \mathrm{SL}_X(t) + t(1-p) \big\},
    \label{eq:A}
    \end{equation}
and the infimum is attained at any $t$ such that $F(t) = p$. Conversely, $\forall t \in \mathbb{R}$,
    \begin{equation}
    \mathrm{SL}_X(t) 
    = \sup_{p \in [0,1]} 
      \bigg\{ \int_p^1 F^{-1}(u)\,du - t(1-p) \bigg\}.
    \label{eq:B}
    \end{equation}
These formulas show that the functions 
$t \mapsto \mathrm{SL}_X(t)$ and $p \mapsto \int_p^1 F^{-1}(u)\,du$
are Legendre–Fenchel conjugates up to a linear change of variables.

\vspace{0.5em}
\noindent\textbf{($\Rightarrow$)} 
Assume that $X \le_{icx} Y$.
By definition of the increasing convex order, this is equivalent to 
\[
\mathrm{SL}_X(t) \le \mathrm{SL}_Y(t), \qquad \forall\, t \in \mathbb{R}.
\]
Then, for any $p \in (0,1)$, applying \eqref{eq:A} to $X$ and $Y$ yields
\begin{align*}
\int_p^1 F^{-1}(u)\,du
&= \inf_t \big\{\mathrm{SL}_X(t) + t(1-p)\big\} \\
&\le \inf_t \big\{\mathrm{SL}_Y(t) + t(1-p)\big\}\\
&= \int_p^1 G^{-1}(u)\,du.
\end{align*}
Hence the stated inequality holds for all $p$.

\vspace{0.5em}
\noindent\textbf{($\Leftarrow$)} 
Conversely, assume that 
\[
\int_p^1 F^{-1}(u)\,du \le \int_p^1 G^{-1}(u)\,du
\quad \forall\,p \in (0,1).
\]
By \eqref{eq:B}, for each $t \in \mathbb{R}$,
\begin{align*}
\mathrm{SL}_X(t)
&= \sup_{p \in [0,1]} 
  \Big\{\int_p^1 F^{-1}(u)\,du - t(1-p)\Big\}\\
&\le \sup_{p \in [0,1]} 
  \Big\{\int_p^1 G^{-1}(u)\,du - t(1-p)\Big\}
= \mathrm{SL}_Y(t).
\end{align*}
Thus, $\mathrm{SL}_X(t) \le \mathrm{SL}_Y(t)$ for all $t$, 
and therefore $X \le_{icx} Y$. Combining both directions, we have established that
\[
X \le_{icx} Y
\; \Longleftrightarrow \;
\int_p^1 F^{-1}(u)\,du \le \int_p^1 G^{-1}(u)\,du
\quad \forall\, p \in (0,1)
\]
\end{proof}

\begin{proof}[Proof of Corollary \ref{cor:expetced_quantile_contraction}]
From Theorem \ref{thm:icx}, $S_{f^G} \preceq_{\mathrm{icx}} S_f$, meaning that $\mathbb{E}[\varphi(S_{f^G})]
\;\le\;
\mathbb{E}[\varphi(S_f)]$ for all increasing convex functions $\varphi$.
Choosing $\varphi(t)=t$, which is increasing and convex, yields
\begin{equation}\label{eq:means}
\mathbb{E}[S_{f^G}] \;\le\; \mathbb{E}[S_f].
\end{equation}

Next, recall the standard identity relating a random variable to its
quantile function.  
If $X$ has quantile function $F_X^{-1}$ and $U\sim\mathrm{Unif}(0,1)$,
then $F_X^{-1}(U)\stackrel{d}{=}X$, and therefore
\begin{equation}\label{eq:quantile-mean}
\mathbb{E}[X]
= \mathbb{E}[F_X^{-1}(U)]
= \int_0^1 F_X^{-1}(u)\,du.
\end{equation}

Applying \eqref{eq:quantile-mean} to $S_{f^G}$ and $S_f$ and using
\eqref{eq:means} gives
\[
\mathbb{E}\!\left[F^{-1}_{S_{f^G}}(U)\right]
= \mathbb{E}[S_{f^G}]
\;\le\;
\mathbb{E}[S_f]
= \mathbb{E}\!\left[F^{-1}_{S_f}(U)\right].
\]
Equivalently,
\(
\int_0^1 F^{-1}_{S_{f^G}}(u)\,du
\;\le\;
\int_0^1 F^{-1}_{S_f}(u)\,du.
\)

Now, let $X \leq_{icx} Y$ and $\phi$ be an increasing convex function. 
Then, it is well known that $\phi(X) \leq_{icx} \phi(Y)$. 
By Lemma~\ref{lemma:2.2}, this is equivalent to saying
\begin{equation}
\int_p^1 F_{\phi}^{-1}(t)\, dt 
   \leq 
   \int_p^1 G_{\phi}^{-1}(t)\, dt
\label{eq:2.2}
\end{equation}
$\forall p \in (0,1)$ and for all increasing and convex $\phi$, where 
$F_{\phi}^{-1}(t) = \phi\!\left(F^{-1}(t)\right)$ 
and 
$G_{\phi}^{-1}(t) = \phi\!\left(G^{-1}(t)\right)$ 
are the quantile functions of $\phi(X)$ and $\phi(Y)$, respectively. Evidently, \eqref{eq:2.2} is equivalent to
\begin{equation}
\int_{G(x)}^1 F_{\phi}^{-1}(t)\, dt 
   \leq 
   \int_{G(x)}^1 G_{\phi}^{-1}(t)\, dt
\label{eq:2.3}
\end{equation}
$\forall x \in \mathbb{R} $ and for all increasing convex $\phi$.
Since
\[
\frac{\int_p^1 F^{-1}(t)\, dt}{1-p}
   = \mathbb{E}\!\left[ X \mid X > F^{-1}(p) \right],
\]
\eqref{eq:2.3} is equivalent to
\begin{align}
&\mathbb{E}\!\left[ \phi(X) \mid \phi(X) > F_{\phi}^{-1}\!\big(G(x)\big) \right]\nonumber\\
&\qquad\qquad\leq \mathbb{E}\!\left[ \phi(Y) \mid \phi(Y) > G_{\phi}^{-1}\!\big(G(x)\big) \right]\label{eq:2.4}
\end{align}
Since $X \stackrel{d}{=} F^{-1} \circ G(Y)$, \eqref{eq:2.4} implies that $$\mathbb{E}\left[ \phi\big(F^{-1} \circ G(Y)\big) \,\middle|\, Y > x \right]
   \leq 
   \mathbb{E}\left[ \phi(Y) \,\middle|\, Y > x \right]$$
\end{proof}

\begin{proof}[Proof of Lemma \ref{lem:mgf}]
By Jensen's inequality applied to the exponential function and the convexity of $s$, we have
\begin{align*}
&\mathbb{E}\bigl[e^{\lambda s(f_\theta^G(X),Y)}\bigr]
= \mathbb{E}\Bigl[e^{\lambda s\bigg( \mathbb{E}_G\Big[\psi_g \circ f_\theta \circ \phi_{g^{-1}}(X)\Big], Y \bigg)}\Bigr]\\
&\qquad\le \mathbb{E}\Bigl[\mathbb{E}_G[e^{\lambda s(\psi_g \circ f_\theta \circ \phi_{g^{-1}}(X),Y)}]\Bigr]
= \mathbb{E}\bigl[e^{\lambda s(f_\theta(X),Y)}\bigr],
\end{align*}
using the $G$-invariance of $(X,Y)\sim P$. Taking the logarithmic functions yields the result.
\end{proof}

\begin{proof}[Proof of Lemma \ref{lemma:hoeffding}]
    From Jensen's inequality $$s(f_\theta^G(X_i),Y_i)\leq \mathbb{E}_g\Big[s\Big(f_\theta\big(\phi_{g^{-1}}(X_i)\big),\psi_{g^{-1}}(Y_i)\Big)\Big]$$
    Because the group actions $\varphi_g$ and $\psi_g$ are $G$-isometries (measure-preserving transformations) and $S_f\in [0,b]$
    $$0 \leq s\Big(f_\theta\big(\phi_{g^{-1}}(X_i)\big),\psi_{g^{-1}}(Y_i)\Big) \leq b\quad \forall g \in G$$
    After averaging over the orbit, $b_G:=\sup_{(X_i,Y_i)}s(f_\theta^G(X_i),Y_i) \leq b$ and $\mathbb{B}_{\text{Hoeffding}}(\bar S_{n_c}) \geq \mathbb{B}_{\text{Hoeffding}}(\bar S^G_{n_c})$.
\end{proof}


\bibliographystyle{ieeetr}
\bibliography{bibliography}

\end{document}